\pdfoutput=1
\documentclass[conference]{IEEEtran}
%

\usepackage{times} 

\usepackage{amsmath}
\usepackage{amsfonts}
\usepackage{amssymb}
\usepackage{algorithmic}
\usepackage{graphicx}
\usepackage{graphics}
\usepackage{caption}
\DeclareCaptionType{copyrightbox}
\usepackage{subcaption}
\usepackage[usenames,dvipsnames]{color}
\usepackage{pbox}
\usepackage{pifont}
\usepackage{xspace}
\usepackage{xcolor}
\usepackage{array}
\usepackage{url}
\usepackage{hyperref}
\usepackage[boxed,vlined]{algorithm2e}
\usepackage{paralist} 
\usepackage{listings} 
\typeout{needs tikz package}
\typeout{***********************************************}

\usepackage{tikz}
\typeout{if tikz is missing, comment out all the lines below 'tikz comment out'}
\usetikzlibrary{calc}

\newtheorem{problem}{Problem}

\newtheorem{lemma}{Lemma}
\newtheorem{theorem}{Theorem}

\newcommand{\hide}[1]{}

\newcommand{\bit}{\begin{compactitem}}
\newcommand{\eit}{\end{compactitem}}
\newcommand{\ben}{\begin{compactenum}}
\newcommand{\een}{\end{compactenum}}

\newcommand{\method}{\textsc{fBox}\xspace}
\newcommand{\spoken}{\textsc{SpokEn}\xspace}
\newcommand{\copycatch}{\textsc{CopyCatch}\xspace}
\newcommand{\oddball}{\textsc{OddBall}\xspace}
\newcommand{\blatant}{blatant\xspace}

\newcommand{\cross}{\textcolor{red}{\ding{56}}}
\newcommand{\tick}{\textcolor{OliveGreen}{\ding{52}}}

\begin{document}
\title{Spotting Suspicious Link Behavior with fBox: An Adversarial Perspective}

\author{\IEEEauthorblockN{Neil Shah}
\IEEEauthorblockA{Carnegie Mellon University\\
Pittsburgh, PA \\
neilshah@cs.cmu.edu}
\and
\IEEEauthorblockN{Alex Beutel}
\IEEEauthorblockA{Carnegie Mellon University\\
Pittsburgh, PA\\
abeutel@cs.cmu.edu}
\and
\IEEEauthorblockN{Brian Gallagher}
\IEEEauthorblockA{Lawrence Livermore Lab\\
Livermore, CA\\
bgallagher@llnl.gov}
\and
\IEEEauthorblockN{Christos Faloutsos}
\IEEEauthorblockA{Carnegie Mellon University\\
Pittsburgh, PA\\
christos@cs.cmu.edu}}


%


\maketitle

\begin{abstract}
How can we detect suspicious users in large online networks?
Online popularity of a user or product (via follows, page-likes, etc.)
can be monetized on the premise of higher ad click-through rates or increased sales.
Web services and social networks which incentivize popularity thus suffer from a major problem of fake connections from link fraudsters looking to make a quick buck.
Typical methods of catching this suspicious behavior
use spectral techniques to spot large
groups of often blatantly fraudulent (but sometimes honest) users.  
However, small-scale, stealthy attacks may go unnoticed due to the nature of low-rank eigenanalysis used in practice.

In this work, we take an adversarial approach to find and prove claims about the weaknesses of modern, state-of-the-art spectral methods and propose \method, an algorithm designed
to catch small-scale, \emph{stealth attacks} that slip below the radar.  Our algorithm has the following desirable properties:
(a) it has theoretical underpinnings,
(b) it is shown to be highly effective on real data
and
(c) it is scalable (linear on the input size).
We evaluate \method 
on a large, public 41.7 {\it million} node, 1.5
{\it billion} edge who-follows-whom social graph from Twitter in 2010 and with high precision 
identify many 
suspicious accounts which have persisted without suspension even to this day.

\end{abstract}


%
\IEEEpeerreviewmaketitle

\section{Introduction}
\label{sec:intro}
In an online network, 
how can we distinguish honest users from deceptive ones?
Since many online services rely on machine learning algorithms to recommend relevant content to their users, it is crucial to their performance that user feedback be legitimate and indicative of true interests.  
``Fake'' links via the use of sockpuppet/bot accounts can enable arbitrary (frequently spammy or malicious) users and products of varying nature seem credible and popular, thus degrading the online experience of users.
Unsurprisingly, numerous sites such as {\tt buy1000followers.co}, {\tt boostlikes.com} and {\tt buyamazonreviews.com} exist to provide services such as fake Twitter followers, Facebook page-likes and Amazon product reviews for typically just a few dollars per one-thousand fake links.

\hide{

    Because many online services rely on user feedback, it is crucial that the
    feedback be honest and legitimate.  However, this also
    creates a large market for attackers to falsely give credibility
    and popularity to certain people or content,
    e.g. through fake Twitter followers~\cite{stringhinitwitter12},
    LinkedIn connections, Facebook Likes \cite{copycatch13}, or eBay reviews
    \cite{netprobe07pandit}.  
    In fact, an online search yields numerous services to boost
    online popularity, often through dubious methods, with Twitter Followers or
    Page Likes sold by the thousands.
    
    In this work, we target detection of link fraud, characterized by attackers adding links, or connections, to customers in order to increase the customer's perceived popularity.  This type of fraudulent feedback is disruptive to many modern web services that
    use machine learning algorithms to recommend relevant
    content to their users.  For example, Netflix users are recommended movies
    based on what similar users enjoyed, Amazon users are
    suggested products based on previous purchases, and
    Facebook users are suggested Pages based on the interests of their
    social circles.  Falsely popularized content distorts
    recommendations and dishonestly engenders trust~\cite{fbsec12}. Given that
    link fraud is an unfortunate reality universal to incentivized online services, we aim to distinguish honest users from dishonest attackers and their customers by analysis on the input graph characterizing the links between users and objects.  This approach is the most direct in catching link fraud, given that such attacks can be inferred from edges in the input graph without the use of extraneous side information.  Although a number of previous efforts have attacked similar problems, it is poorly understood how these techniques can work together to combat link fraud and what limitations they suffer from.
    
}

Here we focus exactly on the link-fraud problem.  We take an adversarial approach to illustrate 
when and how current methods fail to detect fraudsters and design a new complementary algorithm, \method, to spot attackers who evade
these state-of-the-art techniques. 
Figure
\ref{fig:crown} 
showcases several suspicious accounts spotted by \method -- 
we elaborate on three of them, marked using the triangle, square and star glyphs.
All three are identified as outliers
in the \method Spectral Reconstruction Map (SRM) shown in Figure~\ref{fig:crown}b.
The corresponding Twitter profiles are shown in 
Figure \ref{fig:crown}c, and further manual inspection
shows that all three accounts exhibit suspicious behavior:
\bit
\item triangle: it has only 2 tweets but over 1000 followers
\item square: it is part of a 50-clique with suspicious names
\item star: it posts tweets advertising a link fraud service
\eit

Our main contributions are the following:

\begin{compactenum}
\item {\bf Theoretical analysis:} We prove limitations 
     of the detection range of spectral-based methods. 
 \item {\bf \method algorithm:} We introduce \method, 
        a {\em scalable} method
	 that {\em boxes-in} attackers, since 
	 it spots small-scale, stealth attacks  
	 which evade spectral methods.
   \item {\bf  Effectiveness on real data:} We apply \method to a real, 41.7
	   {\it million} node, 1.5 {\it billion} edge Twitter who-follows-whom
	   social graph from 2010 and identify many still-active accounts with suspicious follower/followee links, spammy
	   Tweets and otherwise strange behavior. 
\end{compactenum}

{\bf Reproducibility:}
 Our code is available at \url{http://www.cs.cmu.edu/~neilshah/code/}.  The Twitter dataset is also publicly available as cited in \cite{kwak10www}.

\begin{figure*}[!t]
\centering
\begin{subfigure}{.3\textwidth}
  \centering
  \includegraphics[width=0.98\textwidth]{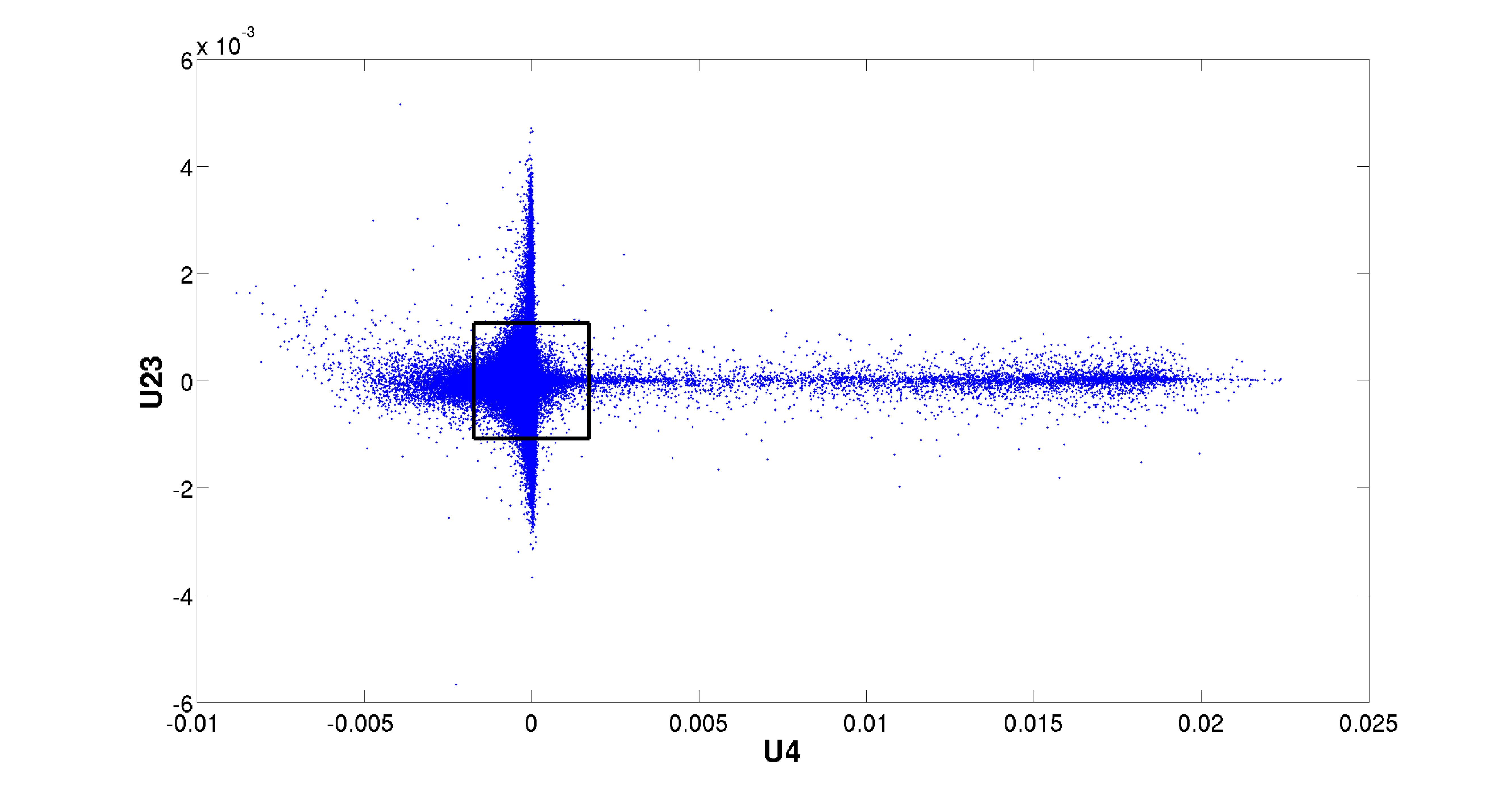}
  \vspace{10mm}
  \caption{Spectral subspace plot}
  \label{fig:crown_eigenplot}
\end{subfigure}%
\begin{subfigure}{.33\textwidth}
  \centering
  \includegraphics[width=0.97\textwidth]{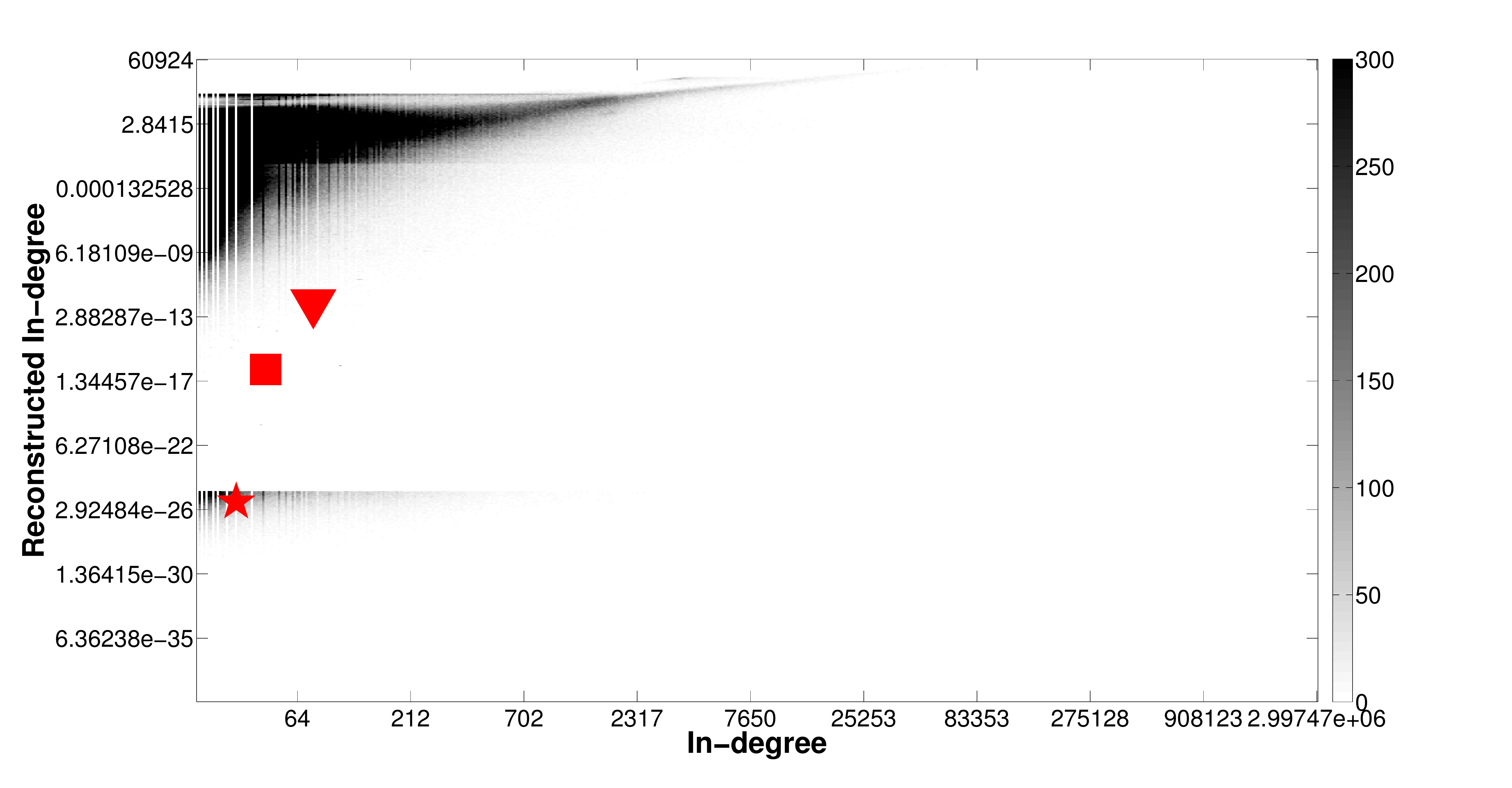}
  \vspace{8mm}
  \caption{Proposed ISRM}
  \label{fig:crown_isrm}
\end{subfigure}%
\begin{subfigure}{.3\textwidth}
  \centering
  \includegraphics[width=0.77\textwidth]{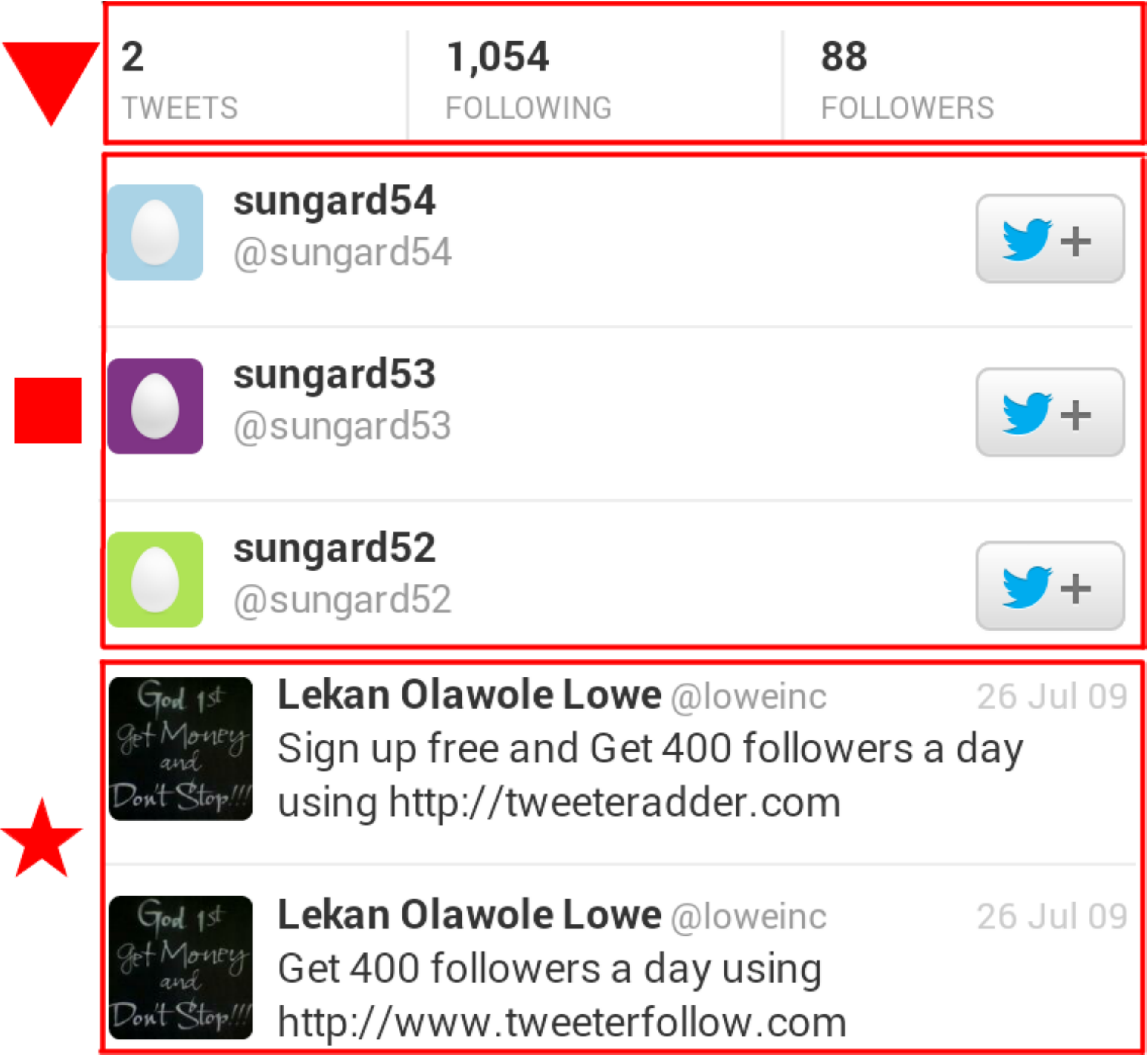}
  \caption{Suspicious accounts}
  \label{fig:crown_fraudsters}
\end{subfigure}
\caption{\method catches stealth attacks which are missed by spectral methods.  (a) shows a spectral subspace plots on the Twitter social graph which identifies \blatant attacks but ignores stealth attackers (at the origin).  (b) portrays how the proposed \method ISRM (In-link Spectral Reconstruction Map) can describe these users by their \emph{reconstruction degree} and identifies several with improbably poor reconstruction.  (c) shows their suspicious profiles with matching glyphs (see text for details).}
\label{fig:crown}
\end{figure*}


\section{Background and Related Work}
\label{sec:background}
We begin by reviewing in detail several of the current state-of-the-art methods
in web fraud and spam detection.  Table \ref{tbl:qualcomp} shows a qualitative comparison between various link fraud detection methods.

\subsection{Spectral methods}

\makeatletter
\newcommand{\thickhline}{%
    \noalign {\ifnum 0=`}\fi \hrule height 1pt
    \futurelet \reserved@a \@xhline
}
\newcolumntype{"}{@{\hskip\tabcolsep\vrule width 1.5pt\hskip\tabcolsep}}
\makeatother

\begin{table*}
\centering
\caption{Qualitative comparison between \method and other link fraud detection methods.}
\label{tbl:qualcomp}
\begin{tabular}{|l||c|c|c|c"c|c|}
\hline 
\multicolumn{1}{ |c|| }{\bf P r o p e r t i e s}  & \multicolumn{5}{c|}{\bf M e t h o d s} \\
\textbf{}  & {\bf \spoken} & {\bf Spectral Subspace Plotting} & {\bf \copycatch} & {\bf \oddball} & {\bf \method} \\
\hline \hline
\textbf{Detects stealth attacks} & \cross & \cross & \cross & \cross & \tick  \\ \hline
\textbf{Camouflage resistant} & \cross & \tick & \tick & \cross & \tick \\ \hline
\textbf{Offers visualization} & \tick & \tick & \cross & \cross & \tick \\ \hline
\end{tabular}
\end{table*}

We classify techniques that analyze the latent factors produced in graph-based eigenanalysis or matrix/tensor decomposition as spectral methods.  These algorithms seek to find patterns in the graph decompositions to extract coherent groups of users or objects.  Prakash et al's work on the EigenSpokes pattern~\cite{eigenspokes10} and Jiang et al's work on spectral subspaces of social networks~\cite{jiang14} are two such approaches that we will primarily focus on and which have been employed on real datasets to detect suspicious link behavior. \cite{ying2011spectrum} uses a similar analysis of spectral patterns, but focuses on random link attacks (RLAs), which have different properties than link fraud and therefore produce different patterns.

These works utilize the Singular Value Decomposition (SVD) of the input graph's adjacency matrix in order to group similar users and objects based on their projections.  Recall that the SVD of a $u \times o$ matrix {\bf A} is defined as ${\bf A = U{\Sigma}V^T}$, where {\bf U} and {\bf V} are $u \times u$ and $o \times o$ matrices respectively containing the left and right singular vectors, and ${\bf \Sigma}$ is a $u \times o$ diagonal matrix containing the singular values of {\bf A}.  Both papers note the presence of unusual patterns (axis-aligned spokes, tilting rays, pearl-like clusters, etc.) when plotting the  singular vectors ${\bf U_{i}}$ and ${\bf U_{j}}$ for some $i, j \leq k$, where $k$ is the SVD decomposition rank, indicative of suspicious lockstep behavior between similar users.  The authors use these patterns to chip out communities of similar users from input graphs. 

Beyond directly searching for suspicious behavior, spectral methods have been used for a variety of applications.
\cite{malspot14} builds off the above work to use tensor decomposition for network intrusion detection. 
\cite{cobafi} proposes a robust collaborative filtering model that clusters latent parameters to limit the impact of fraudulent ratings from potential adversaries.
\cite{ng2001spectralA} and \cite{huang2008spectral} propose using eigenvectors of graph decompositions for graph partitioning and community detection.


Although spectral methods have shown promise in finding large communities and blatantly suspicious behavior in online networks, they are universally vulnerable given knowledge of the decomposition rank $k$ used in a given implementation.  All techniques operating on large graphs use such a parameter in practical implementations given that matrix decompositions are very computationally expensive~\cite{kang2014heigen}.  Previous spectral methods have generally chosen small values of $k < 100$ for purposes of computability.  As we will show in Section~\ref{sec:adversarial}, knowledge of  $k$ or the associated singular value threshold (inferrable from sample datasets online) enables an intelligent adversary to engineer attacks to fall below the detection threshold.  

\subsection{Graph-traversal based methods}

A wide variety of algorithms have been proposed to directly traverse the graph to find or stop suspicious behavior.
\cite{shrivastava2008mining}  offers a random walk algorithm for detecting RLAs.
\cite{ghosh2012understanding} proposes a PageRank-like approach for penalizing promiscuous users on Twitter, but is unfortunately only shown to be effective in detecting already caught spammers rather than detecting new ones. 
\cite{netprobe07pandit} uses belief propagation to find near-bipartite cores of attackers on eBay. 

However, most similar in application is Beutel et al's \copycatch algorithm to find suspicious lockstep behavior in Facebook Page Likes~\cite{copycatch13}. 
\copycatch is a clustering method that seeks to find densely connected groups in noisy data through restricted graph traversal, motivated with the intuition of fraud taking the form of na\"{i}vely created bipartite cores in the input graph. The algorithm uses local search in the graph to find dense temporally-coherent near-bipartite cores (TNBCs) given attack size, time window and link density parameters.

Clustering methods like \copycatch are able to avoid detection problems caused by camouflage (connections created by attackers to legitimate pages or people for the purposes of appearing like honest users) given that they ignore such links if the attacker is party to any TNBC.  However, identifying the appropriate ``minimal attack'' parameters is nontrivial.  Non-conservative parameter estimates will result in many uncaught attackers whereas excessively conservative estimates will result in numerous false positives.  From an adversarial point-of-view,  we argue that the cost of incurring false positives and troubling honest users is likely not worth the added benefit of catching an increased number of attackers after some point.  
Therefore, an alternative approach to catch stealth attacks falling below chosen thresholds is necessary.

\subsection{Feature-based methods} 
Spam and fraud detection has classically been framed as a feature-based classification problem, e.g. based on the words in spam email or URLs in tweets.  
However, \cite{berkeleyspam} focuses on malicious Tweets and finds that blacklisting approaches are too slow to stem the spread of Twitter spam.  \oddball~\cite{akoglu2010oddball} proposes features based on egonets to find anomalous users on weighted graphs.  
\cite{domingoskdd04} and \cite{lowd2005adversarial} take a game theoretic
approach to learning simple classifiers over generic features to detect spam.
While related in the adversarial perspective, these approaches focus on general
feature-based classification as used for spam email, rather than
graph analysis as is needed for link fraud detection.

\section{An Adversarial Analysis - Our Perspective}
\label{sec:adversarial}
In this section, we examine the exploitability of state-of-the-art methods
from an adversarial point-of-view and present lemmas and theorems detailing the limitations of these methods.  Particularly, we demonstrate through
theoretical analysis that existing methods are highly vulnerable to evasion by
intelligent attackers.  Table~\ref{tbl:symb} contains a comprehensive list of
symbols and corresponding definitions used in our paper.

\begin{figure}[!t]
\centering
\begin{subfigure}{.16\textwidth}
  \centering
  \includegraphics[width=0.98\textwidth]{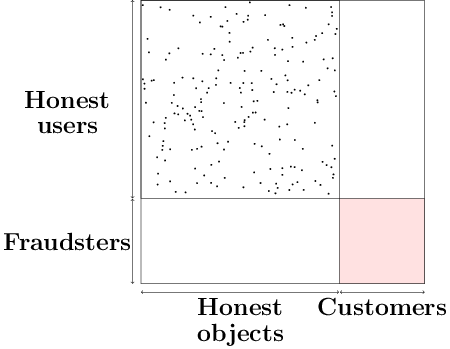}
  \caption{Na\"{i}ve}
  \label{fig:block_attack}
\end{subfigure}%
\begin{subfigure}{.16\textwidth}
  \centering
  \includegraphics[width=0.98\textwidth]{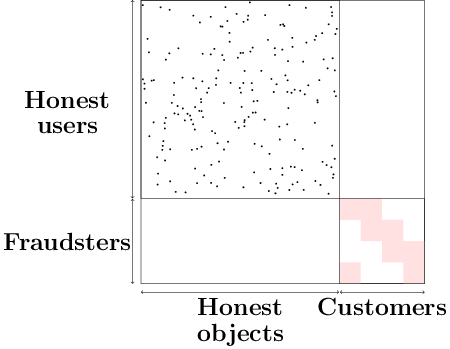}
  \caption{Staircase}
  \label{fig:staircase_attack}
\end{subfigure}%
\begin{subfigure}{.16\textwidth}
  \centering
  \includegraphics[width=0.98\textwidth]{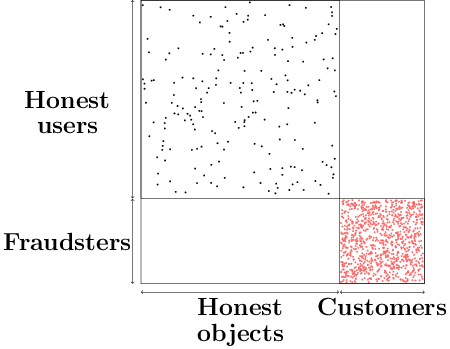}
  \caption{Random graph}
  \label{fig:random_attack}
\end{subfigure}%
\caption{(a), (b) and (c) show the different types of adversarial attacks we characterize.}
\label{fig:attack_types}
\end{figure}

\begin{table*}
\centering
\caption{Frequently used symbols and definitions}
\label{tbl:symb}
    \begin{tabular}{|l|l|}
    \hline
    {\bf Symbol}               & {\bf Definition}                                                                                                                         \\ \hline
    $u$ and $o$              & Number of user and object nodes described by the input graph                                                                       \\ \hline
    $\mathcal{U}$ and $\mathcal{P}$              & Sets of indexed rows and columns corresponding to user and object nodes in the input graph                                         \\ \hline
    {\bf A}                    & $u \times o$ input graph adjacency matrix where ${\bf A}_{x,y} = 1$ if a link exists between user node $x$ and object node $y$                           \\ \hline
    $f$ and $c$              & Number of attacker and customer nodes described by the attack graph                                                               \\ \hline
    $s$                    & Number of fraudulent actions each customer node has paid commission for in the attack graph                                        \\ \hline
    $\mathcal{F}$ and $\mathcal{C}$              & Sets of indexed rows and columns corresponding to attacker nodes and customer nodes in the attack graph                           \\ \hline
    {\bf S}                    & $f \times c$ attack graph adjacency matrix where ${\bf S}_{x,y} = 1$ if a link exists between attacker node $x$ and customer node $y$                   \\ \hline
    $k$                    & Decomposition rank parameter used by spectral methods                                                                              \\ \hline
    $\lambda_k$ and $\sigma_k$ & $k$th largest eigenvalue and singular value of a given matrix (largest values for $k = 1$) \\ \hline
    $m$, $n$ and $p$           & Bipartite core size and edge probability parameters used by clustering methods                                                     \\ \hline
    \end{tabular}

\end{table*}

Given knowledge of the detection threshold used by a certain service, how can an attacker engineer smart attacks on that service to avoid detection by fraud detection
methods? 

Formally, we pose the following adversarial problem:
\begin{problem}
{\bf Given} an input graph adjacency matrix {\bf A}, with rows and columns corresponding to users and objects, {\bf engineer} a stealth attack which falls \emph{just below} the minimum sized attack detectable by modern state-of-the-art fraud detection methods.
\label{prob1}
\end{problem} 
As previously described, most detection methods focus on finding fairly blatant 
bipartite cores or cliques in the input graph.  Therefore, if an adversary knows the
minimum size attack that detection methods will catch, he can carefully engineer attacks to fall
just below that threshold.
For clustering approaches like \copycatch, this threshold is clearly set based
on input parameters, and the attacker can simply use fewer accounts than specifiedto avoid detection.

However, for spectral methods like \spoken, the possible attack size for an
adversary is unclear.
We argue that from an adversarial perspective, these spectral methods have a
detection threshold based on the input graph's singular values.
For a rank $k$ SVD used in these methods, this threshold is governed by the
$k$th largest singular value, $\sigma_k$.
In practice, an adversary could estimate $\sigma_k$
from the results of
various experimental attacks conducted at distinct scales, or by conducting analysis on publicly available social network data.  
Once an adversary has such an estimate, we show that it
is easy to conduct attacks on the graph adjacency matrix {\bf A} that will
necessarily lie below this threshold and avoid detection.

To analyze what type of attacks can evade detection by spectral methods,  
let us consider that there are $c$ customers who have each commissioned an attacker with $f$ nodes in his botnet for $s$ fraudulent actions (page likes,
followers, etc.), where $s \leq f$.
This type of attack can be considered as an injected submatrix {\bf S} of size
$f \times c$, where rows correspond to attacker nodes (controlled by a single
fraudulent operator) in the set of users ($\mathcal{F} \subset \mathcal{U}$)
and the columns represent customers in the set of objects ($\mathcal{C} \subset
\mathcal{P}$).  
In this formulation, the desired in-degree of all nodes in {\bf S} is $s$.

As described earlier, an attack will only be detected by a spectral algorithm
if it appears in the top $k$ singular values/vectors.  Therefore, our goal as an adversary becomes to understand the spectral properties of our attacks and ensure that they
do not project in the rank $k$ decomposition.
We can consider the spectral properties of {\bf S} in isolation
from the rest of the graph, as it is well known that the spectrum of a
disconnected graph is the union of the spectra of its connected components.
From this, we deduce that it is sufficient to consider only the
representation of {\bf S} and ignore the remainder of {\bf A} when trying to
minimize the leading singular value that the attack contributes to the singular
spectrum of {\bf A}.  {\it Therefore, an attack {\bf S} with leading singular
value $\sigma'$ will go undetected by spectral methods if $\sigma' < \sigma_k$, where $\sigma_k$ is
the $k$th largest singular value computed for the adjacency matrix {\bf A}.}

Having reduced the problem of adversarial injection to distributing some amount
of fraudulent activity over the $f \times c$ matrix {\bf S}, we next consider
several distinct patterns of attack which characterize types of fraudulent
behavior discovered in the analysis of prior work.  Specifically, we explore
three fraud distribution techniques: na\"{i}ve, staircase and random
graph injections.  Figure \ref{fig:attack_types} gives a pictorial representation of each of these types of attacks.
We evaluate the suitability of each attack for an adversary on the basis of the
leading singular value that the pattern generates.

\subsection{Na\"{i}ve Injection}
\label{sec:naivemethod}
This is the most notable attack pattern considered in prior work. The na\"{i}ve
injection distributes the $sc$ total fraudulent actions into an $s \times c$
submatrix of {\bf S}.  Thus, only $s$ of the $f$ attacker nodes perform any
fraudulent actions, and all fraudulent actions are distributed between these
$s$ nodes. In graph terms, this is equivalent to introducing a $s \times
c$ complete bipartite core.  Such an attack corresponds to attackers
na\"{i}vely linking the same set of $s$ nodes to each of the $c$
customers, producing a full block in {\bf A}.  Figure~\ref{fig:block_attack}
shows a visual representation of such an attack.

\begin{lemma}
	The leading singular value of an $s \times c$ bipartite core injection is $\sigma_1 = \sqrt{cs}$.
\label{lemma:naive}
\end{lemma}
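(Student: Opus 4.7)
The plan is to exploit the fact that the injected submatrix is just an all-ones block, hence rank one, so its singular value decomposition can be written down by inspection.

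First I would set up notation: let $\mathbf{B}$ denote the $s \times c$ block of {\bf S} consisting of all ones (the complete bipartite core), and let $\mathbf{1}_s \in \mathbb{R}^s$, $\mathbf{1}_c \in \mathbb{R}^c$ denote the all-ones vectors. Observe that $\mathbf{B} = \mathbf{1}_s \mathbf{1}_c^T$, so $\mathbf{B}$ is rank one. Because {\bf S} outside this block is identically zero, the nonzero singular spectrum of {\bf S} coincides with that of $\mathbf{B}$.

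Next I would read off the singular value. For any rank-one matrix $\mathbf{u}\mathbf{v}^T$ with nonzero $\mathbf{u},\mathbf{v}$, the unique nonzero singular value equals $\|\mathbf{u}\|_2 \|\mathbf{v}\|_2$, with corresponding left/right singular vectors $\mathbf{u}/\|\mathbf{u}\|_2$ and $\mathbf{v}/\|\mathbf{v}\|_2$. Applying this with $\mathbf{u} = \mathbf{1}_s$ and $\mathbf{v} = \mathbf{1}_c$ gives
\[
\sigma_1 = \|\mathbf{1}_s\|_2 \,\|\mathbf{1}_c\|_2 = \sqrt{s}\cdot\sqrt{c} = \sqrt{cs},
\]
as claimed.

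If a more self-contained argument is preferred, I would instead compute $\mathbf{B}^T\mathbf{B}$, which is the $c \times c$ matrix with every entry equal to $s$ (since each column of $\mathbf{B}$ has $s$ ones). This matrix equals $s\,\mathbf{1}_c\mathbf{1}_c^T$, whose largest eigenvalue is $s \cdot c$ with eigenvector $\mathbf{1}_c$, and all other eigenvalues are zero. Taking square roots yields $\sigma_1(\mathbf{B}) = \sqrt{cs}$. There is no real obstacle in this proof — the only subtlety worth flagging explicitly is the step reducing the spectrum of {\bf S} to that of its nonzero block, which follows because the padding rows and columns of zeros contribute only zero singular values.
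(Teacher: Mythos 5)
Your proof is correct, but it takes a different route from the paper's. You exploit the rank-one structure directly: writing the block as $\mathbf{1}_s\mathbf{1}_c^T$ and reading off the unique nonzero singular value as $\|\mathbf{1}_s\|_2\|\mathbf{1}_c\|_2 = \sqrt{cs}$ (or equivalently diagonalizing $s\,\mathbf{1}_c\mathbf{1}_c^T$ exactly). The paper instead forms ${\bf SS^T}$, notes that every entry equals $c$ so every row sum equals $cs$, and invokes the Perron--Frobenius row-sum bounds (min row sum $\leq \lambda_1 \leq$ max row sum) to pin down $\lambda_1 = cs$, then takes the square root. Your argument is more elementary and yields strictly more information -- the full SVD, including the singular vectors, and the fact that all other singular values vanish -- whereas the paper's Perron--Frobenius technique does not need the matrix to be rank one and is reused essentially verbatim for the random-graph injection (Theorem 3), where the row sums are only approximately equal and no rank-one factorization exists. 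So the paper's choice buys uniformity across its three attack patterns at the cost of being slightly heavier here. Your flagging of the zero-padding reduction is appropriate and matches the paper's implicit treatment; both arguments are complete.
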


\begin{proof}
	Since {\bf S} is a full block, where ${\bf S_{i,j}} = 1$ for all $i \leq s$, $j \leq c$, ${\bf SS^T}$ must be an $s \times s$ matrix where ${\bf {SS^T}_{i,j}} = c$ for all $i,j \leq s$.  By the Perron-Frobenius Theorem for non-negative matrices, the leading eigenvalue of ${\bf SS^T}$ $\lambda_1$ is bounded by
\begin{displaymath}
\operatorname*{min}_i \sum\limits_{j} {\bf {SS^T}_{i,j}} \leq \lambda_1 \leq \operatorname*{max}_i \sum\limits_{j} {\bf {SS^T}_{i,j}} \textrm{ for } i \leq s
\end{displaymath}
Given that the row sums are equal to $cs$, $\lambda_1 = cs$ for ${\bf SS^T}$.  Since the singular values of {\bf S} are equal to the square roots of the eigenvalues of ${\bf SS^T}$ by definition, it follows naturally that the leading singular value is $\sigma_1 = \sqrt{cs}$ for {\bf S}.     
\end{proof}

\subsection{Staircase Injection}
\label{sec:steppingmethod}
The staircase injection (discovered in \cite{jiang14}) evenly distributes $cs$ fraudulent actions over $f$ attacker nodes.  However, unlike in
the na\"{i}ve method, where each node that performs any fraudulent actions
does so for each of the $c$ customers, the staircase method forces different
subsets of nodes to associate with different subsets of customers.  This
distribution results in the {\bf S} matrix looking like a staircase of links.
Figure~\ref{fig:staircase_attack} shows a visual representation of such an
attack.

We restrict our analysis here to staircase injections in which all users have
equal out degrees $o$ and equal in degrees $i$, though $o$ need not equal $i$.
When out degrees and in degrees are not equal, users and objects do not have
uniform connectivity properties which complicates calculations.  In particular,
we assume that the periodicity of the staircase pattern, given by $t =
lcm(s,f)/s$ is such that $t|c$ to ensure this criteria.  However, for large
values of $c/t$, $\sigma_1 \approx{s\sqrt{c/f}}$ given LLN.

\begin{theorem}
	The leading singular value of an $s, c, f$ staircase injection is $\sigma_1 = s\sqrt{c/f}$. 
\label{thm:staircase}
\end{theorem}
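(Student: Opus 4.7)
The plan is to mimic the structure of Lemma 1: reduce the claim to computing the leading eigenvalue of $\mathbf{S S^T}$ via a Perron--Frobenius argument, after first establishing the uniform row/column sum structure that the staircase with periodicity condition $t \mid c$ guarantees.

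First, I would unpack the staircase construction under the stated assumption that $t = \operatorname{lcm}(s,f)/s$ divides $c$. The combinatorics of the staircase pattern, together with $t \mid c$, is precisely what ensures that every attacker row of $\mathbf{S}$ contains the same number of ones and every customer column contains the same number of ones. Since the total number of ones is $cs$ and these are distributed uniformly across $f$ rows and $c$ columns, each row sum of $\mathbf{S}$ is $cs/f$ and each column sum is $s$. This regularity is the main substantive input from the geometry of the staircase and is the step I would spend the most care on, since the rest is mechanical.

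Next, I would compute the row sums of $\mathbf{S S^T}$ by interchanging summations:
\begin{equation*}
\sum_{j=1}^{f} (\mathbf{S S^T})_{i,j} \;=\; \sum_{j=1}^{f} \sum_{k=1}^{c} \mathbf{S}_{i,k}\,\mathbf{S}_{j,k} \;=\; \sum_{k=1}^{c} \mathbf{S}_{i,k}\left(\sum_{j=1}^{f} \mathbf{S}_{j,k}\right).
\end{equation*}
Using column-sum regularity, the inner sum is $s$, and then using row-sum regularity for row $i$, the outer sum contributes $cs/f$. Hence every row sum of $\mathbf{S S^T}$ equals $cs^2/f$, independent of $i$.

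Finally, since $\mathbf{S S^T}$ is non-negative with all row sums equal to $cs^2/f$, the Perron--Frobenius bounds used in Lemma 1 collapse to give $\lambda_1(\mathbf{S S^T}) = cs^2/f$. Because the singular values of $\mathbf{S}$ are the square roots of the eigenvalues of $\mathbf{S S^T}$, we conclude $\sigma_1 = \sqrt{cs^2/f} = s\sqrt{c/f}$, as required. The hard part is really just step one: convincing oneself that the staircase, exactly when $t \mid c$, is doubly regular; once that is in hand, the spectral computation proceeds by the same Perron--Frobenius shortcut used in the na\"ive case, and the approximation remark about large $c/t$ follows from a law-of-large-numbers argument on the near-regularity of row sums when $t \nmid c$.
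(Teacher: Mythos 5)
Your proof is correct, and it is actually more self-contained than the paper's. The paper proves this theorem by asserting that the staircase is ``equivalent to a random graph-injection with edge probability $p=s/f$'' and explicitly omits the reduction, deferring to Theorem~\ref{thm:random}, whose Perron--Frobenius computation is carried out only in expectation; a remark after that theorem then observes that the staircase is the exactly-uniform case so the row sums of $\mathbf{SS^T}$ equal $cs^2/f$ exactly. You bypass the probabilistic detour entirely: you extract the double regularity of the staircase (row sums $cs/f$, column sums $s$) from the divisibility hypothesis $t \mid c$, compute the row sums of $\mathbf{SS^T}$ exactly by interchanging the order of summation, and invoke the same Perron--Frobenius squeeze used in Lemma~\ref{lemma:naive}. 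This is the same spectral engine the paper relies on, but applied deterministically, and it supplies precisely the step the paper leaves out. The one place to be careful, as you note yourself, is verifying that the staircase construction under $t \mid c$ really is biregular (and that $f \mid cs$ so the common row sum $cs/f$ is an integer); given the paper's explicit restriction to staircases with equal out-degrees and equal in-degrees, that is exactly the standing assumption, so your argument goes through.
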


\begin{proof}
The staircase injection is equivalent to a random graph-injection of $f \times c$ with edge probability $p = s/f$.  The reduction is omitted due to space constraints.  Such a random graph injection has leading singular value $s\sqrt{c/f}$ (proof given in Section~\ref{sec:randommethod}).
\end{proof}

\subsection{Random Graph Injection}
\label{sec:randommethod}
The random graph injection bears close resemblance to the near-bipartite core
with density $p$ attack noted in \cite{copycatch13}.  The random graph
injection distributes $\approx{sc}$ fraudulent actions over the $f$ attacker nodes
approximately evenly.  Figure~\ref{fig:random_attack} shows a visual
representation of such an attack.  This approach assigns each node a fixed
probability $p = sc/cf = s/f$ of performing a fraudulent operation associated
with one of the $c$ customers.  Given LLN, the average number of fraudulent
operations per customer will be close to the expected value of $s$, and as a
result the total number of fraudulent actions will be close to $sc$.  The
random graph injection is similar to the Erd\"{o}s-R\'{e}nyi model defined by
$G(n,p)$~\cite{erdos1959}, except we consider a directed graph
scenario with $cf$ possible edges.  However, as Erd\"{o}s and R\'{e}nyi studied
the asymptotic behavior of random graphs, their results are applicable here as
well.  

\begin{theorem}
	The leading singular value of an $s, c, f$ directed random bipartite graph is $\sigma_1 \sim s\sqrt{c/f}$. 
\label{thm:random}
\end{theorem}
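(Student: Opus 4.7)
The plan is to decompose the random matrix ${\bf S}$ into its mean plus a zero-mean noise term and control each piece separately. Since each entry of ${\bf S}$ is an independent Bernoulli($p$) with $p = s/f$, the expectation $E[{\bf S}] = p\,{\bf J}$, where ${\bf J}$ is the $f \times c$ all-ones matrix. This matrix is rank one, with its single nonzero singular value equal to
\[
p\,\|{\bf 1}_f\|\,\|{\bf 1}_c\| = (s/f)\sqrt{fc} = s\sqrt{c/f},
\]
which is exactly the claimed target. So the task reduces to showing that the random fluctuations about this mean contribute only a lower-order correction to $\sigma_1$.

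First I would write ${\bf S} = E[{\bf S}] + {\bf N}$, where ${\bf N}$ has independent, zero-mean entries bounded in $[-p, 1-p]$ with variance $p(1-p)$. By Weyl's inequality for singular values,
\[
\bigl|\sigma_1({\bf S}) - \sigma_1(E[{\bf S}])\bigr| \;\leq\; \|{\bf N}\|_2.
\]
The core technical step is then bounding $\|{\bf N}\|_2$. Here I would invoke a Füredi-Komlós style concentration result for rectangular random matrices with independent bounded entries (as extended by Vu and others): for a bipartite Erd\H{o}s-R\'enyi graph with edge probability $p$, $\|{\bf N}\|_2 = O\!\left(\sqrt{p(1-p)(f+c)}\right)$ with high probability. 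Substituting $p = s/f$ yields $\|{\bf N}\|_2 = O\!\left(\sqrt{s(1 + c/f)}\right)$.

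Finally I would check that the signal dominates the noise in the regime of interest. The ratio $s\sqrt{c/f} \,/\, \sqrt{s(1+c/f)}$ scales as $\sqrt{sc/(f+c)}$, which grows unboundedly whenever the attack is non-trivial (e.g., $s$ fixed and $c,f$ growing, or all three growing together with $sc \gg f+c$). In such regimes, $\sigma_1({\bf S}) = s\sqrt{c/f}\,(1 + o(1))$, giving the stated asymptotic relation and, via Theorem 1, the analogous bound for the staircase injection.

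The main obstacle is the concentration bound on $\|{\bf N}\|_2$: the entries are asymmetric about zero and the matrix is rectangular, so classical Wigner-style statements do not apply directly. Rather than rederive the bound from a trace-moment calculation, I would cite the appropriate rectangular variant (Vu's bound for random bipartite graphs), which handles independent bounded entries in exactly the form needed, and then only the trivial rank-one computation above remains.
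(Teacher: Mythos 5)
Your proof is correct, and it takes a genuinely different route from the paper's. The paper works with the Gram matrix ${\bf SS^T}$: it computes the expected off-diagonal entries as $p^2c$, notes that the expected row sums are $p^2cf = cs^2/f$, and then sandwiches $\lambda_1$ between the minimum and maximum row sums via the Perron--Frobenius bound, concluding $\lambda_1 \sim cs^2/f$ and hence $\sigma_1 \sim s\sqrt{c/f}$. That argument is elementary but heuristic: it quietly substitutes expected row sums for actual ones (appealing only informally to the law of large numbers), and it glosses over the fact that the diagonal entries of ${\bf SS^T}$ have expectation $pc$ rather than $p^2c$, a lower-order discrepancy it never addresses. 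Your decomposition ${\bf S} = p{\bf J} + {\bf N}$ isolates the rank-one signal exactly --- the computation $\sigma_1(p{\bf J}) = p\sqrt{fc} = s\sqrt{c/f}$ is clean --- and then Weyl's inequality plus a Vu/F\"uredi--Koml\'os-type bound $\|{\bf N}\|_2 = O\bigl(\sqrt{p(1-p)(f+c)}\bigr)$ turns the hand-waving into a quantified error term, together with an explicit sufficient condition ($sc \gg f+c$) under which the asymptotic $\sigma_1({\bf S}) = s\sqrt{c/f}\,(1+o(1))$ actually holds. What your approach buys is rigor and an explicit validity regime; what it costs is reliance on a nontrivial external concentration result (and one should note that in very sparse regimes, roughly $p(f+c)$ below logarithmic order, those operator-norm bounds degrade, so the stated regime restriction is doing real work). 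The paper's version buys brevity and self-containedness at the price of never actually proving concentration. One small slip: your final sentence routes the staircase case ``via Theorem 1,'' but in the paper the dependence runs the other way --- the staircase theorem is the one that defers to this random-graph result (the staircase being the perfectly uniform, deterministic case where the row sums of ${\bf SS^T}$ are exactly $cs^2/f$ and no concentration argument is needed at all).
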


\begin{proof}
Given that probability of an edge between an attacker node and a customer is $p = s/f$, it is apparent that 
\begin{displaymath}
E({\bf {SS^T}_{i,j}}) = {p^2}c \textrm{ for } i,j \leq f
\end{displaymath}
since the value of each cell in the $f \times f$ matrix ${\bf SS^T}$ will be a result of the inner product of the corresponding row and column vectors of length $c$ with probability $p$ of a nonzero entry at any $i \leq c$.  Since each row in ${\bf SS^T}$ has $f$ entries,
\begin{displaymath}
E(\sum \limits_{j} {\bf {SS^T}_{i,j}}) = {p^2}cf \textrm{ for } i \leq f
\end{displaymath}
By the Perron-Frobenius theorem for non-negative matrices, the leading eigenvalue $\lambda_1$ of ${\bf SS^T}$ will be bounded by
\begin{displaymath}
\operatorname*{min}_i \sum\limits_{j} {\bf {SS^T}_{i,j}} \leq \lambda_1 \leq \operatorname*{max}_i \sum\limits_{j} {\bf {SS^T}_{i,j}} \textrm{ for } i \leq f
\end{displaymath}
Given that the row sums are all approximately equal to ${p^2}cf = c{s^2}/f$ (exactly equal to $c{s^2}/f$ if edges in {\bf S} are perfectly uniformly distributed), the leading eigenvalue is $\lambda_1 \sim c{s^2}/f$ for ${\bf SS^T}$.  Since the singular values of {\bf S} are equal to the square roots of the eigenvalues of ${\bf SS^T}$, it follows naturally that the leading singular value is $\sigma_1 = s\sqrt{c/f}$ for {\bf S}.
\end{proof}

Note that the staircase injection discussed earlier in this section is exactly the case of edges in {\bf S} being perfectly uniformly distributed.  For this reason, the leading eigenvalue of ${\bf SS^T}$, where {\bf S} is an $s, c, f$ staircase injection is $\lambda_1 = c{s^2}/f$, and thus the leading singular value is $\sigma_1 = s\sqrt{c/f}$ for {\bf S}.

\subsection{Implications and Empirical Analysis} 


\begin{figure*}[htbp!]
\centering
\begin{subfigure}{.24\textwidth}
  \centering
  \includegraphics[width=0.95\textwidth]{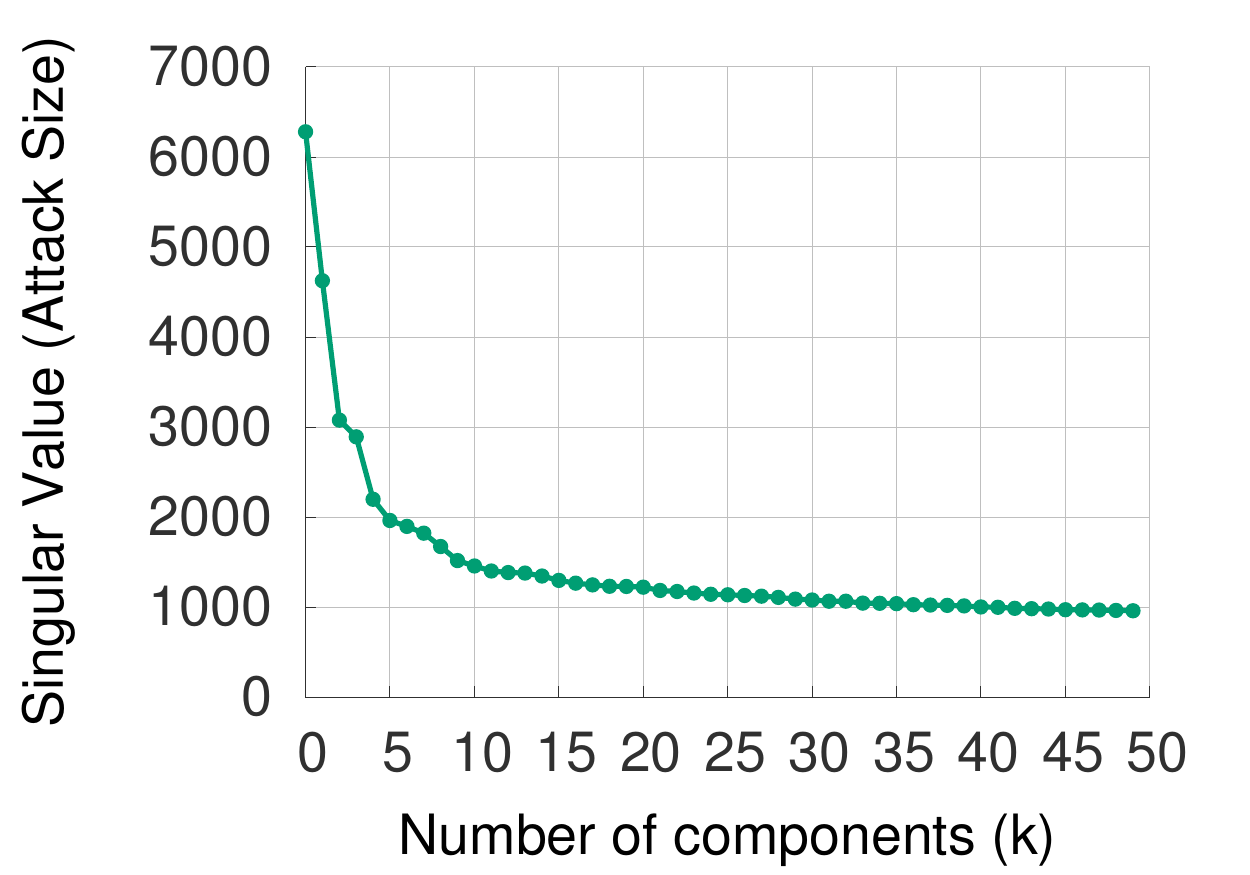}
  \caption{Twitter Followers}
  \label{fig:twitter_svd}
\end{subfigure}%
\begin{subfigure}{.24\textwidth}
  \centering
  \includegraphics[width=0.95\textwidth]{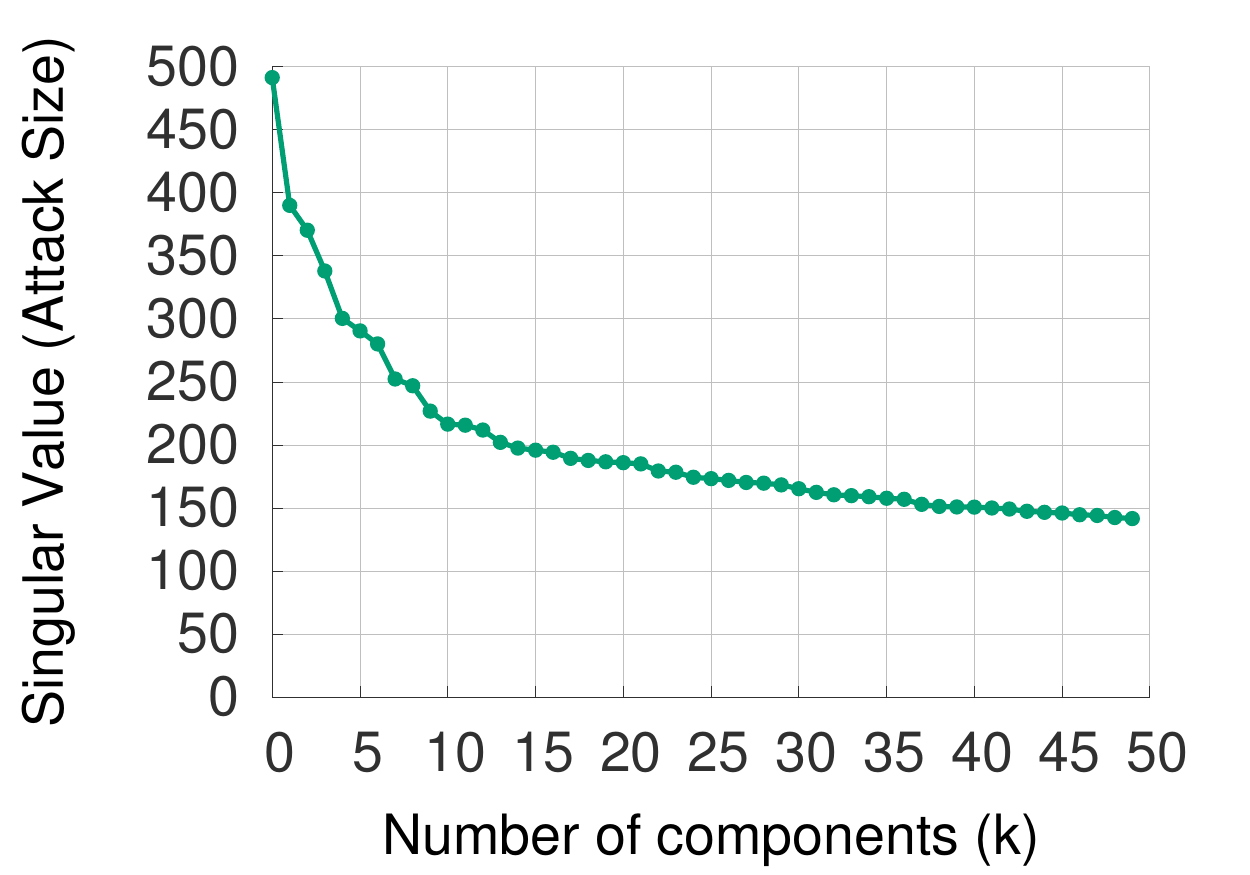}
  \caption{Amazon Reviews}
  \label{fig:amazon}
\end{subfigure}%
\begin{subfigure}{.24\textwidth}
  \centering
  \includegraphics[width=0.95\textwidth]{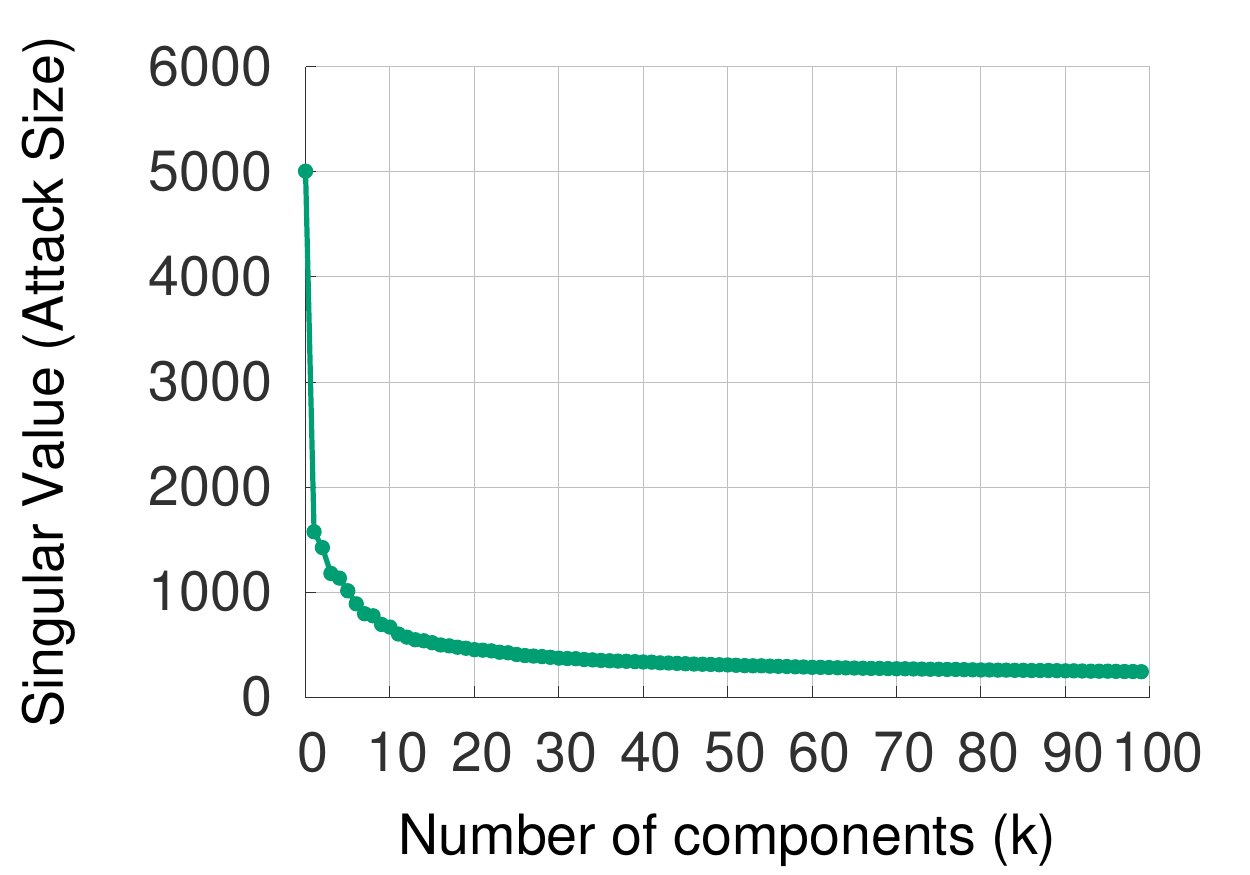}
  \caption{Netflix Ratings}
  \label{fig:netflix}
\end{subfigure}
\begin{subfigure}{.24\textwidth}
  \centering
  \includegraphics[width=0.95\textwidth]{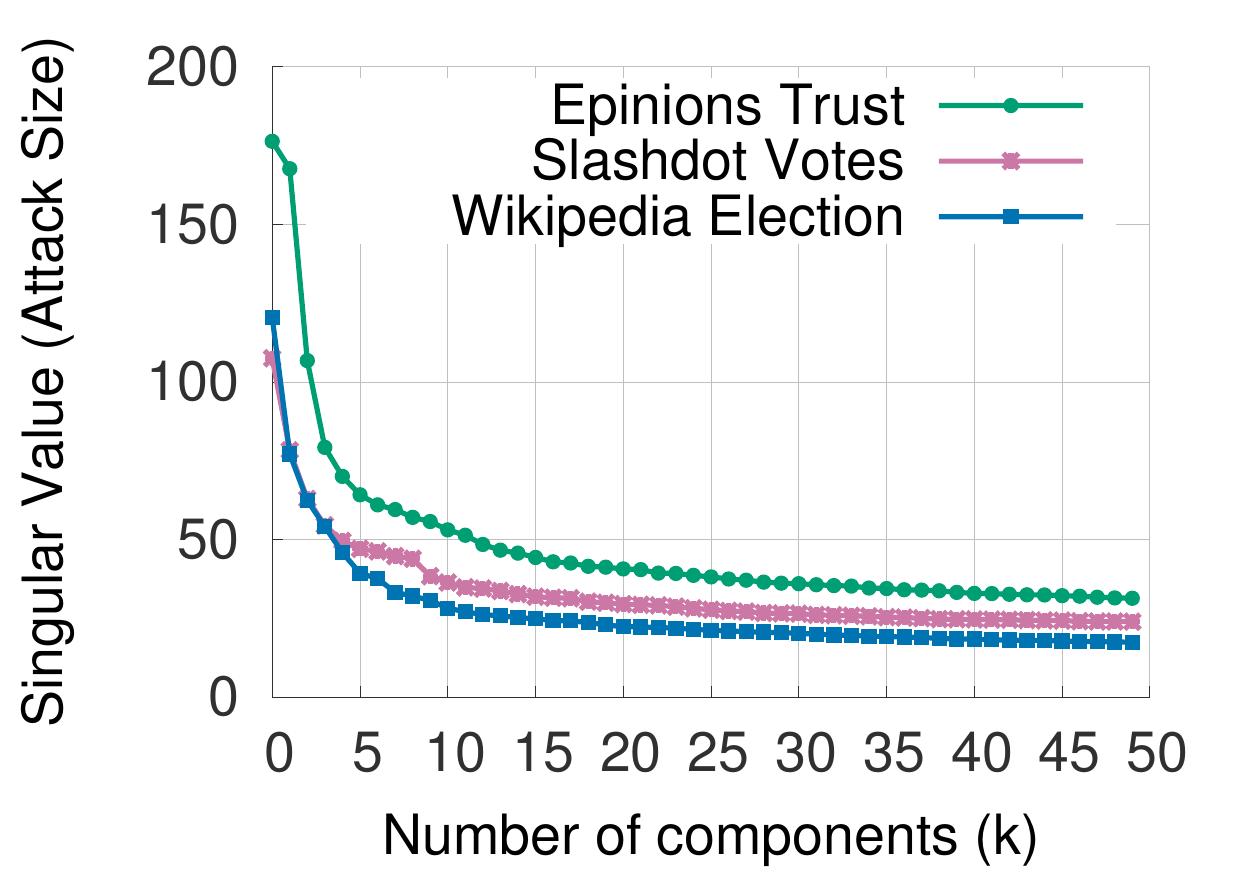}
  \caption{Smaller Graphs}
  \label{fig:other_graphs}
\end{subfigure}
\caption{Skewed singular value distribution in real networks --- spectral ($k$-rank SVD) approaches suffer from stealth attacks.   (a), (b), (c) and (d) show distributions for corresponding networks which allow stealth attacks capable of signficantly impacting local network structure to go undetected.}
\label{fig:svd_graphs}
\end{figure*}

Thus far, we have discussed three different types of potential attack patterns
for a fixed number of fradulent actions and theoretically derived expressions
concerning the leading singular value that they contribute to the singular
spectrum of {\bf A}.  Two of the attack
patterns, the staircase and random graph injections, produce leading singular
values $\sigma_1$ of exactly and approximately $s\sqrt{c/f}$ respectively.  Conversely, na\"{i}ve injection results in a leading singular
value of $\sigma_1 = \sqrt{cs}$.  Given these results, it is apparent that na\"{i}ve injection is the least suitable for an adversarial use, since it will
necessarily produce a larger singular value than the other two methods given that $s \leq f$.  This result is intuitive: the leading eigenvalue of a matrix is a measure of effective connectivity, and packing fraudulent actions into a full block matrix results in higher
connectivity than spreading the actions out over a large, sparse matrix. Our results beget two important conclusions:
\begin{compactenum}
\item Fraud detection tools must consider modes of attack other than na\"{i}ve
	injection --- more intelligent and less detectable means of attack exist
	and are being used.
\item Given knowledge of the effective singular value threshold $\sigma_k$ used
	by spectral detection methods, or $m$, $n$, $p$ parameter choice for
	clustering based methods, attackers can easily engineer attacks of scale up
	to \emph{just below} the threshold without consequence.
\end{compactenum}

\begin{table}[tbp!]
	\centering
	\caption{Graphs used for empirical analysis}
	\label{tab:datasets}
	\begin{tabular}{|c|c|c|}
		\hline
		{\bf Graph} & {\bf Nodes} & {\bf Edges}  \\ \hline \hline
		Twitter~\cite{kwak10www} & 41.7 million & 1.5 billion   \\ \hline
		Amazon~\cite{mcauley2013hidden} & 6m users \& 2m products & 29 million\\ \hline
		Netflix~\cite{netflix}  & 480k users \& 17k videos  & 99 million\\ \hline
		Epinions~\cite{leskovec2010signed} &  131,828 & 841,372\\ \hline
		Slashdot~\cite{leskovec2010signed} & 82,144 & 549,202 \\ \hline
		Wikipedia~\cite{leskovec2010signed} & 8274 & 114,040 \\ \hline
	\end{tabular}
\end{table}

To demonstrate that this leaves a significant opening for attackers, we
analyze the distribution of singular values for a variety of real world graphs
and show just how easy it is to construct attacks which slip below the radar.
In particular, we compute the SVD for six different real world graphs:
Twitter's who-follows-whom social graph, Amazon's bipartite graph of user
reviews for products, Netflix's graph of user reviews for movies, Epinions's
network of who-trusts-whom, Slashdot's friends/foe social graph, and
Wikipedia's bipartite graph of votes for administrators.  For each graph,
we turn it into a binary bipartite graph and compute the SVD for a fixed rank.
The properties of the datasets can be seen in Table \ref{tab:datasets} and the
results can be seen in Figure \ref{fig:svd_graphs}.

In Figure
\ref{fig:twitter_svd} we observe the top $k = 50$ singular values for the Twitter graph.  We
see that the largest singular value is over 6000, but as $k$ increases the
singular values begin to settle around 1000, with $\sigma_{50} = 960.1$.  Lemma~\ref{lemma:naive}
implies that an attacker controlling 960 accounts could use them to follow 960
other accounts and avoid projecting onto any of the top 50 singular vectors.
Note that Lemma~\ref{lemma:naive} also implies that an attacker could add 92
thousand followers to 10 lucky accounts and also go undetected.  These are very large numbers of followers that could
significantly shift the perception of popularity or legitimacy of accounts.
Common spectral approaches would fail to detect such attacks.
                                                                   
A similar analysis can be made for the other graphs.
Figure \ref{fig:amazon} shows that $\sigma_{50} = 141.6$ in the Amazon review graph.  Therefore, attackers could add 140 reviews for 140 products without projecting onto the top 50 singular vectors.  Considering the
average product has 12.5 reviews and a product in the $99^{\rm th}$ percentile
has 187 reviews, 140 reviews is sufficiently large to sway
perception of a product on Amazon.


As seen in Figure \ref{fig:netflix}, we find that $\sigma_{50} = 309.7$ and
$\sigma_{100} = 243.4$ for the Netflix ratings graph.
Therefore, attackers could na\"{i}vely add an injection of 240
ratings to 240 videos from 240 accounts and avoid detection in the top 100
singular vectors.

For the Epinions network, we see in Figure \ref{fig:other_graphs} that
$\sigma_{50} = 31.4$.  Although this value is much smaller than that for other
graphs, the Epinions network is small and sparse, with the average user having
an in-degree of 6.4.  Based on this singular value, an attacker adding 30 edges
(statements of trust or distrust) to 30 users would significantly influence the
external view of those users.

In Slashdot's friend vs. foe graph, $\sigma_{50} = 23.9$, as seen in Figure \ref{fig:other_graphs}.
This means that attackers
could add 23 ratings for 23 users while avoiding spectral detection.
Considering that the average in-degree for accounts in this network is 6.7,
adding 23 edges would significantly impact the perception of a user.

Lastly, we examine the graph of 2794 administrative elections on Wikipedia.
As shown in Figure
\ref{fig:other_graphs}, $\sigma_{50} = 17.5$.  This implies that 17
users could for 17 elections all vote together and avoid detection.
In fact, 31\% of elections were settled by 17 votes or less. An attacker could also modify the shape of the attack such that 5 users would each
receive 57 votes, enough to win 72\%.
Given an attack of this scale, a small group of accounts could cooperate to unfairly rig election outcomes. 

From these examples across a variety of networks, we see that
using spectral approaches for catching fraud leaves a wide opening for
attackers to manipulate online graphs.


\section{Proposed Framework for Fraud Behaviors}
\label{sec:bigpic}
As demonstrated in Section~\ref{sec:adversarial}, current detection methods are
effective in catching blatant attacks, but drop in efficacy as the attack
size decreases.  Though the scale of attacks detected is defined differently
for various datasets given distinct decomposition rank $k$, such a
detectability cross-over point necessarily exists given the well-defined nature
of the singular value produced by common types of attacks.  In this section, we
give a broader overview of possible attack modes and the capabilities of
current methods in dealing with them.  Table \ref{tab:bigpic} illustrates how
current techniques fit into our classification of suitable defenses against
four different attack types and how the proposed \method algorithm can fill in
the remaining holes to provide a more holistic framework for fraud detection.

\begin{table}
	\centering
	\caption{Types of attacks and suitable detection techniques}
	\label{tab:bigpic}
	\bgroup
	\def\arraystretch{2}
	\begin{tabular}{c|c|c|}
	   & {\bf No Camouflage } & {\bf Camouflage }\\ \hline
	   {\bf Blatant Attacks }& \pbox{10cm}{\spoken; \\ \copycatch} & \pbox{10cm}{Spectral subspace plotting; \\ \copycatch} \\ \hline
	   {\bf Stealth Attacks }& (proposed) {\bf \method} & (proposed) {\bf \method} \\ \hline
	\end{tabular}
	\egroup
	
\end{table}

The four types of attacks we broach in this work are classified based on two dichotomies --- the \emph{scale} of attack and the presence of \emph{camouflage}.  The scale of attack concerns whether an attack of some size defined in terms of the aformentioned $s$, $c$ and $f$ parameters in the context of a given dataset (and decomposition rank $k$ for spectral methods), is detectable or not.  The attack could be staged using any of the fraud distribution patterns discussed in Section~\ref{sec:adversarial}.  In the context of clustering methods, scale is more formally defined by the minimal attack size parameters used.  Camouflage refers to uncommissioned actions conducted by attackers in order to appear more like honest users in the hopes of avoiding detection.  For example, camouflage on Twitter is most commonly seen as attackers following some honest users for free in addition to paid customers.  Attacks with camouflage are more difficult to detect than those without, given the increased likelihood of a practitioner to overlook suspicious actions. 

\subsection{Blatant Attack/No Camouflage}

Of the four types, \blatant attacks without camouflage are the easiest to
spot.  Blatant attacks whose singular values are above the threshold $\sigma_k$
and thus appear in the rank-$k$ decomposition of spectral methods 
produce spoke-like patterns and can be identified using \spoken.  It is worth noting that \spoken is a
method to chip out large communities from graphs, and not necessarily attackers.  Verification of the \blatant lockstep behavior as fraudulent is required in this case.

\subsection{Blatant Attack/Camouflage}

Naturally, \blatant attacks with camouflage are more difficult to spot than
without.  Though the singular values of the attacks are above the threshold
$\sigma_k$ and the associated singular vectors appear in the rank-$k$
decomposition of spectral methods, Jiang et al. showed that rather than
axis-aligned spokes, the spectral subspace plots showed tilting rays.
\copycatch is also effective in detecting \blatant attacks with camouflage
(provided that the parameter choices are sufficiently
large to limit the rate of false positives), given that camouflage is ignored
in the case that an $m$, $n$, $p$ near-bipartite core is found for a subset of
$\mathcal{U}$ and $\mathcal{P}$ for a fixed snapshot of the input graph.

\subsection{Stealth Attack/No Camouflage}

As concluded in Section~\ref{sec:adversarial}, current detection schemes are
highly vulnerable to stealth attacks engineered to fall below
parameter thresholds of $\sigma_k$ for spectral methods or $m$, $n$, $p$ for
clustering methods.  To the best of our knowledge, no previous technique has
been able to successfully and effectively identify users involved in these
types of attacks.  Though stealth attacks may be individually of lesser
consequence to detect than larger cases of fraud, they have the insidious
property of being able to achieve the same number of fraudulent actions in a
more controlled and less detectable manner at the cost of simply creating more
fraud-related accounts.  In response to this threat, we propose the \method
algorithm for identifying such attacks in Section~\ref{sec:algo} and
demonstrate its effectiveness in Section~\ref{sec:exp}.  

\subsection{Stealth Attack/Camouflage}

Given that identifying small scale attacks has thus far been an open problem in
the context of fraud detection, the problem of identifying these with
camouflage has also gone unaddressed.  The difficulty in dealing with
camouflage is particularly apparent when considering user accounts with few
outgoing or incoming links, as is typically the case with smaller attacks.
From the perspective of a practitioner, it may appear that a truly fraudulent
account is mostly honest but with a few suspicious or uncharacteristic links
(insufficient to mark as fraudulent) or infrequently/unsavvily used due to the
small number of total links.  We demonstrate in Section~\ref{sec:exp} that
\method is robust to such smart attacks with moderate amounts of camouflage on
real social network data.

\section{Proposed Algorithm} 
\label{sec:algo}
Thus far, we have seen how existing state-of-the-art techniques have firm
effective detection thresholds and are entirely ineffective in detecting
stealth attacks that fall below this threshold.  Given this problem, it is
natural to consider the following question --- how can we identify the many
numerous small scale attacks that are prone to slipping below the radar of
existing techniques?  In this section, we formalize our problem definition and
propose \method as a suitable method for addressing this problem.

\subsection{Problem Formulation}

We identify the major problem to be addressed as follows:
\begin{problem}
{\bf Given} an input graph adjacency matrix {\bf A}, with rows and columns corresponding to users and objects (could be pages, articles, etc. or even other users), {\bf identify} stealth attacks which are undetectable given a desired decomposition rank-$k$ for {\bf A} (undetectable in that their singular values fall below the threshold $\sigma_k$).
\label{prob2}
\end{problem}
Note that Problem~\ref{prob2} is an exact foil to Problem~\ref{prob1}. In this paper, we primarily focus on smart attacks which fall below a practitioner-defined spectral threshold, given that a number of previous works mentioned have tackled the problem of discovering blatant attacks.   Given that this body of work is effective in detecting such attacks, we envision that the best means of boxing in attackers is a \emph{complementary} approach to existing methods, as our analysis in Section~\ref{sec:bigpic} is indicative of the lack of suitability of a one-size-fits-all technique for catching all attackers.  

\newcommand{\IND}[1][1]{\hspace{#1\algorithmicindent}}

\begin{algorithm}
\algsetup{linenosize=\tiny}
\scriptsize
\begin{algorithmic}[1]
\caption{\method algorithm pseudocode}
\label{alg:fbox}
\REQUIRE Input graph adjacency matrix {\bf A}, \\
\IND[1.6] Decomposition rank $k$, \\
\IND[1.6] Threshold $\tau$
\STATE userCulprits = $\left\{\right\}$
\STATE objectCulprits = $\left\{\right\}$
\STATE outDegrees = $rowSum({\bf A})$
\STATE inDegrees = $colSum({\bf A})$
\STATE $\lbrack{\bf U},{\bf \Sigma},{\bf V}\rbrack = svd({\bf A}, k)$
\FOR{each row $i$ in ${\bf U\Sigma}$}
\STATE recOutDegs = $\|{\bf (U\Sigma)_i}\|^2_2$
\ENDFOR
\FOR{each row $j$ in ${\bf V\Sigma}$}
\STATE recInDegs = $\|{\bf (V\Sigma)_j}\|^2_2$
\ENDFOR
\FOR{each unique $od$ in outDegrees}
\STATE nodeSet = $find(\textrm{outDegrees} == od)$
\STATE recOutDegSet = recOutDegs(nodeSet)
\STATE recThreshold = $percentile(\textrm{recOutDegSet},\tau)$
\FOR{each node $n$ in nodeSet}
\IF{recOutDegs($n$) $\leq$ recThreshold}
\STATE userCulprits = userCulprits + $n$
\ENDIF
\ENDFOR
\ENDFOR
\FOR{each unique $id$ in inDegrees}
\STATE nodeSet = $find(\textrm{inDegrees} == id)$ 
\STATE recInDegSet = recInDegs(nodeSet)
\STATE recThreshold = $percentile(\textrm{recInDegSet},\tau)$
\FOR{each node $n$ in nodeSet}
\IF{recInDegs($n$) $\leq$ recThreshold}
\STATE objectCulprits = objectCulprits + $n$
\ENDIF
\ENDFOR
\ENDFOR
\RETURN userCulprits, \\
\IND[2.2] objectCulprits
\end{algorithmic}
\end{algorithm}

\subsection{Description}

As per the problem formulation, we seek to develop a solely graph-based method,
which will be able to complement existing fraud detection techniques by
discerning previously undetectable attacks.  In Section~\ref{sec:adversarial},
we demonstrated that smaller attacks are particularly characterized by
comparatively low singular values (below $\sigma_k$), and thus do not appear in
the singular vectors given by a rank $k$ decomposition.  Assuming an isolated
attack which has been engineered to fall below the detection threshold, the
users/objects comprising the attack will have absolutely \emph{no} projection
onto any of the top-$k$ left and right singular vectors respectively.  In the
presence of camouflage, projection of the culprit nodes may increase slightly
given some nonzero values in the corresponding indices in one or more of the
vectors.  In either case, we note that nodes involved in these attacks have the
unique property of having zero or almost-zero projections in the projected
space.  Given this observation, two questions naturally arise: (a) how can we
effectively capture the extent of projection of a user or object? and (b) is
there a pattern to how users or objects project into low-rank subspaces?

In fact, we can address the first question by taking advantage of the norm-preserving property of SVD, which states that the row vectors of a full rank decomposition and associated projection will retain the same $l_2$ norm or vector length as in the original space.   That is, for $k = rank({\bf A})$,
\begin{displaymath}
\|{\bf A_i}\|_2 = \|{\bf \textrm{(}U\Sigma\textrm{)}_i}\|_2 \textrm{ for } i \leq u
\end{displaymath}

In the same fashion, one can apply the norm-preserving property to decomposition of ${\bf A^T}$ to show

\begin{displaymath}
\|{\bf {A^T}_j}\|_2 = \|{\bf \textrm{(}V\Sigma\textrm{)}_j}\|_2 \textrm{ for } j \leq o
\end{displaymath}

\begin{figure}[!t]
\centering
\begin{subfigure}{.24\textwidth}
  \centering
  \includegraphics[width=\linewidth]{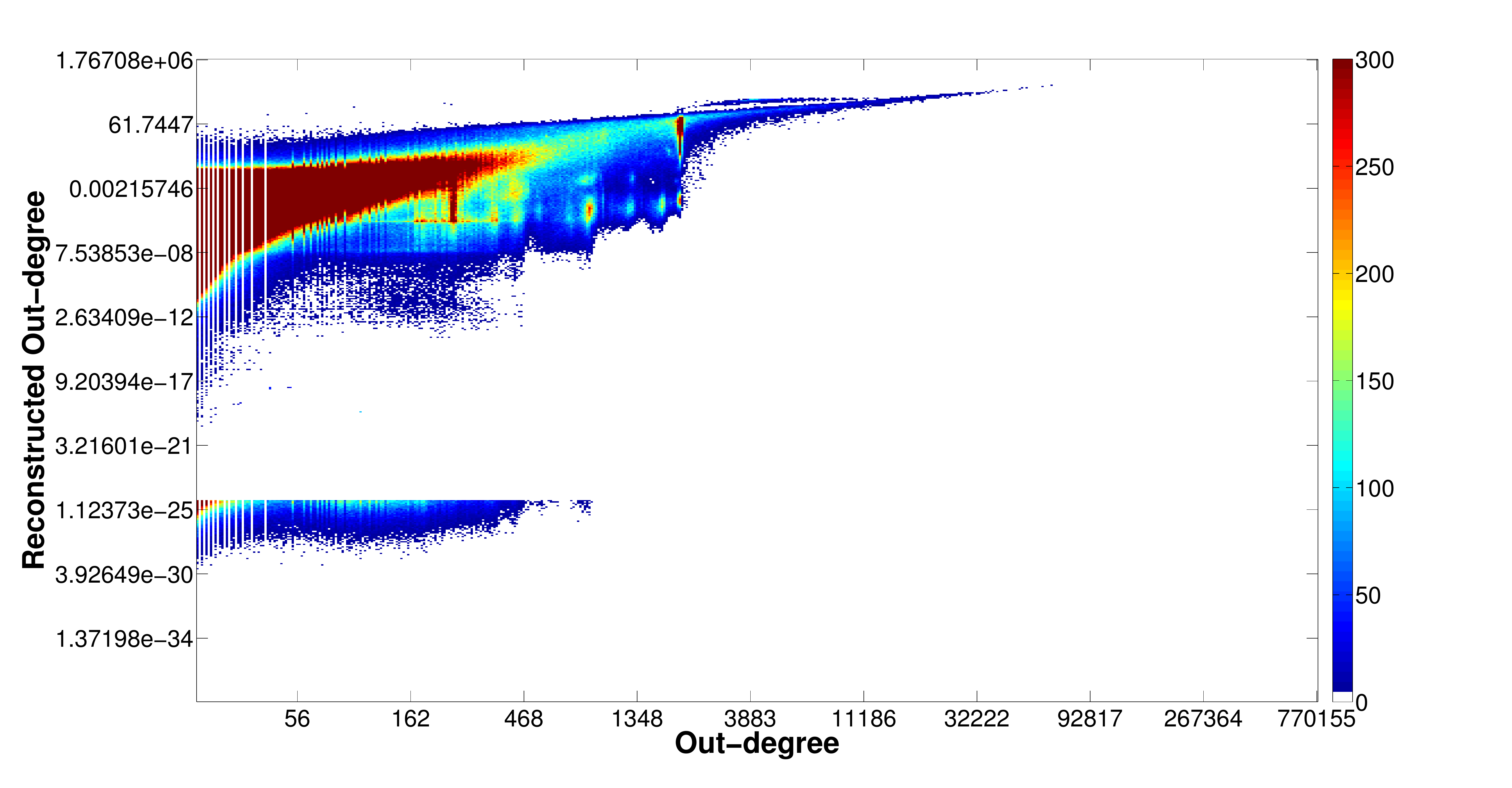}
  \caption{OSRM for Twitter fans}
  \label{fig:osrm}
\end{subfigure}%
\begin{subfigure}{.24\textwidth}
  \centering
  \includegraphics[width=\linewidth]{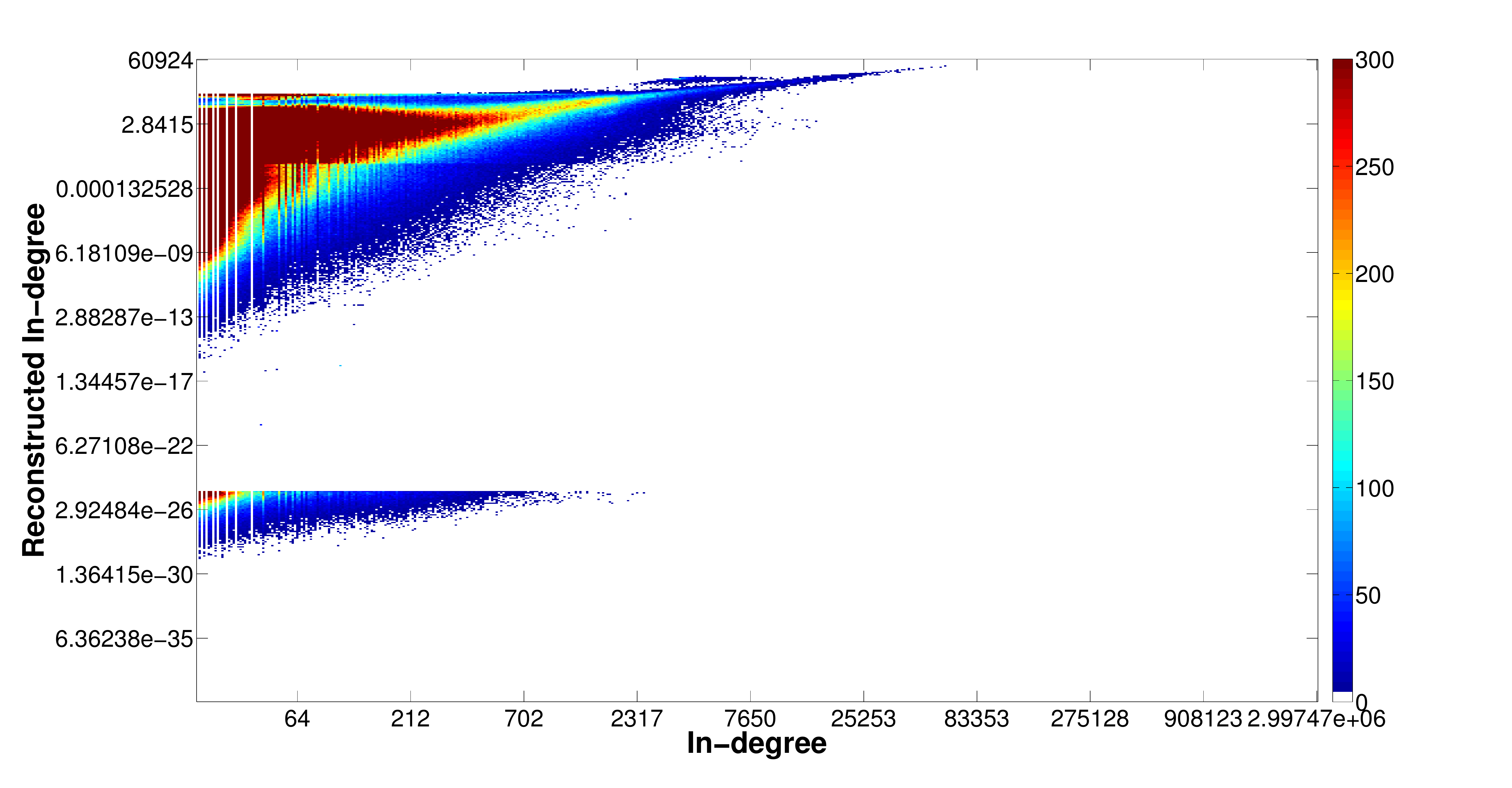}
  \caption{ISRM for Twitter idols}
  \label{fig:isrm}
\end{subfigure}
\caption{SRMs show correlation between the reconstruction degree and suspiciousness of nodes.  (a) and (b) show the SRMs produced from analysis on the Twitter social graph.}
\label{fig:srmtwitter}
\end{figure}

Since the $l_2$ norms are preserved in a full rank decomposition, it is obvious that the sum of squares of components are also preserved.  Note that for the 0-1 adjacency matrix {\bf A} we consider here, the sum of squares of components of the $i$th row vector corresponds to the out-degree of user $i$ and the sum of squares of components of the $j$th column vector corresponds to the in-degree of object $j$ --- given these considerations, we define the degree of a node in a given subspace as the squared $l_2$ norm of its link vector in that subspace.  Thus, for a full rank decomposition, the out-degree given by $\|{\bf A_i}\|^2_2$ and \emph{reconstructed} out-degree given by $\|{\bf \textrm{(}U\Sigma\textrm{)}_i}\|^2_2$ of user $i$ are equal.  The same can be said for the in-degree and \emph{reconstructed} in-degree of object $j$.  For rank $k$ decompositions where $k < rank({\bf A})$ (guaranteed in practical use of spectral methods), it is clear that the true degrees upper bound the reconstructed degrees.  Thus, we can capture the extent of projection of a user by the tuple of his true out-degree and reconstructed out-degree, and we can capture the extent of projection of an object by the tuple of its true in-degree and reconstructed in-degree.

We conjecture that due to the different graph connectivity patterns of dishonest and honest users as well as dishonest and honest objects, their projections in terms of reconstructed degrees should also vary.  Intuitively, dishonest users who either form isolated components or link to dishonest objects will project poorly and have characteristically low reconstruction degrees, whereas honest users who are well-connected to real products and brands should project more strongly and have characteristically higher reconstruction degrees.  In fact, we find that in real data, users and objects have certain ranges in which they commonly reconstruct in the projected space.  Figure~\ref{fig:srmtwitter} shows the OSRM (\emph{Out-link Spectral Reconstruction Map}) and ISRM (\emph{In-link Spectral Reconstruction Map}) for a large, multi-million node and multi-billion edge social graph from Twitter, where we model follower (fan) and followee (idol) behavior.  The data is represented in heatmap form to indicate the distribution of reconstructed degrees for each true degree.  The SRMs indicate that for each true degree, there is a tailed distribution with most nodes reconstructing in a common range and few nodes reconstructing as we move away from this range in either direction.  Most notably, there are a large number of nodes with degrees up to the hundreds with an almost-zero reconstruction, depicted by a well separated point cloud at the bottom of both SRMs.  For higher true degree values in the thousands, nodes are more sparse and rarely project as poorly as for lower true degrees, but many points at these degree values reconstruct several degrees of magnitude lower than the rest.  These observations serve to substantiate our conjecture that poorly reconstructing nodes are suspicious, but what about the  well reconstructing nodes?  Interestingly, we find that nodes which reconstruct on the high range of the spectrum for a given degree have many links to popular (and commonly Twitter-verified) accounts.  We do not classify such behavior as suspicious in the OSRM context, as it is common for Twitter users to follow popular actors, musicians, brands, etc.  We do not classify such behavior as suspicious in the ISRM context either, as popular figures tend to more commonly be followed by other popular figures.  At the bottom of the reconstruction spectrum, however, we most commonly find accounts which demonstrate a number of notably suspicious behaviors in the context of their followers/followees and the content of their Tweets --- more details are given in Section~\ref{sec:exp}.



Based on our intuitive conjecture and empirical verification, we focus our \method algorithm on identifying nodes with characteristically poor reconstructed degree in comparison to other nodes of the same true degree as suspicious.  Specifically, we mark the bottom $\tau$ percent of nodes per fixed degree for both users and objects as culprit nodes.  We outline the high-level steps of \method in Algorithm~\ref{alg:fbox}.


\section{Experiments}
\label{sec:exp}

\subsection{Datasets} 
\label{sub:Datasets}

For our experiments we primarily use two datasets: the who-follows-whom Twitter graph and the who-rates-what Amazon graph.  The Twitter graph was scraped
by Kwak et al. in 2010 and contains 41.7 million users with 1.5 billion edges~\cite{kwak10www}.  We showed the distribution of singular values in Figure
\ref{fig:twitter_svd}.  The Amazon ratings graph was scraped in March 2013 by
McAuley and Leskovec \cite{mcauley2013hidden} and contains 29 million reviews
from 6 million users about 2 million products.  The distribution of singular
values can be seen in Figure \ref{fig:amazon}.  Our analysis is conducted both directly and via synthetic attacks.  





\subsection{\method on real Twitter accounts} 

\begin{figure}[!t]
\centering
\begin{subfigure}{.24\textwidth}
  \centering
  \includegraphics[width=0.98\textwidth]{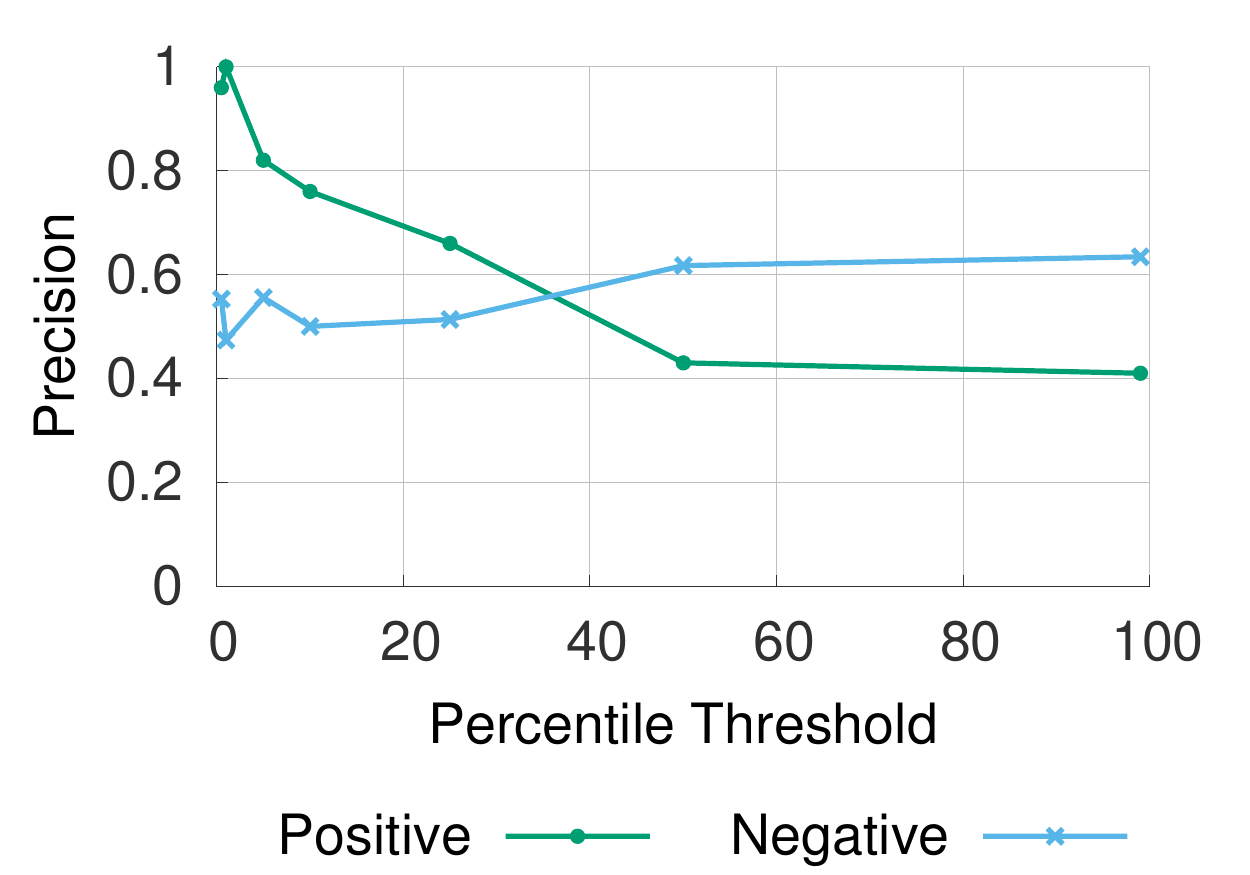}
  \caption{Fans}
  \label{fig:twitter_fans}
\end{subfigure}%
\begin{subfigure}{.24\textwidth}
  \centering
  \includegraphics[width=0.98\textwidth]{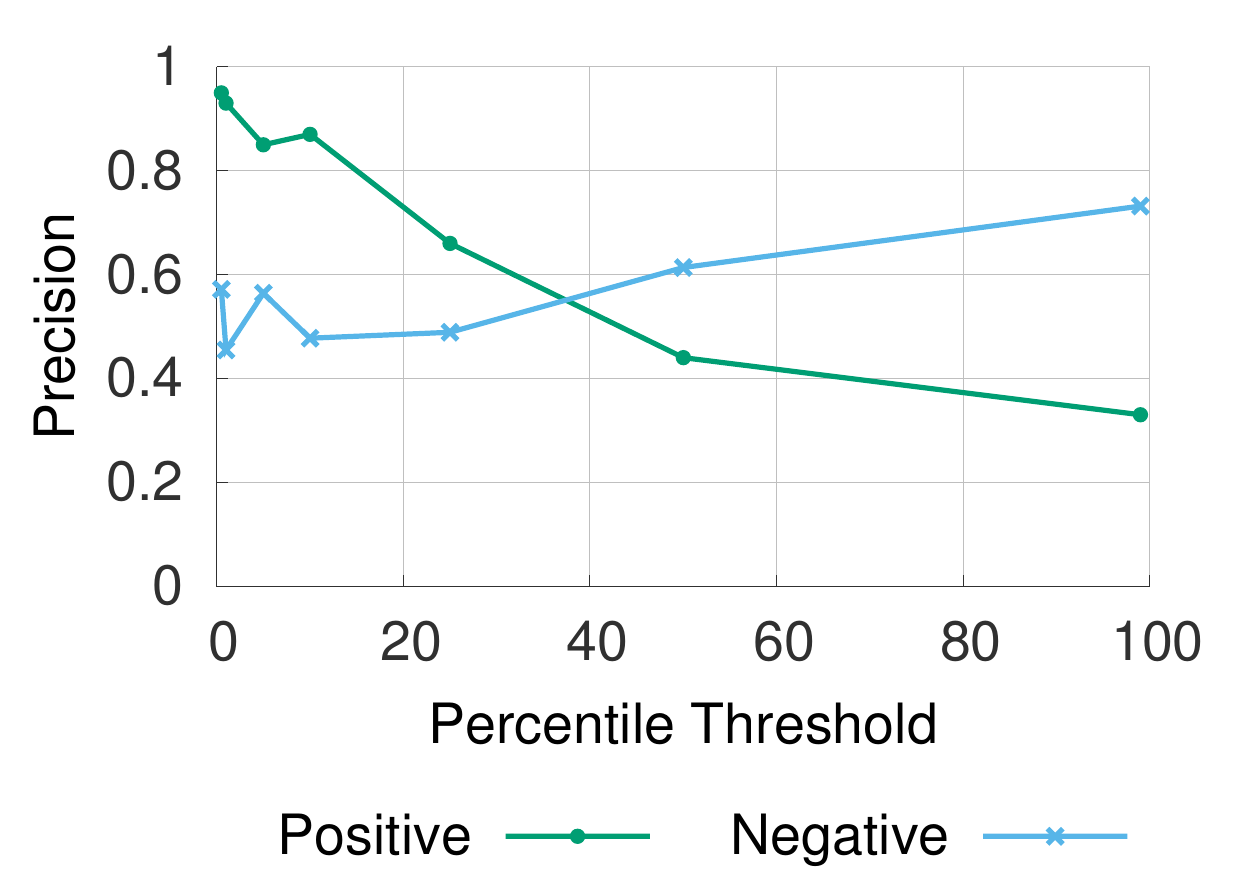}
  \caption{Idols}
  \label{fig:twitter_idols}
\end{subfigure}%
\caption{(a) and (b) show \method's strong predictive value and low false-discovery rate in identifying suspicious accounts.}
\label{fig:twitter_results}
\end{figure}

To show our effectiveness in catching smart link fraud attacks on real data, we
conducted a classification experiment on data from the Twitter graph.
Specifically, we collected the culprit results for suspicious fans and idols with degree at least 20 (to avoid catching unused accounts) for seven different values of the detection threshold $\tau$, at 0.5, 1, 5, 10,
25, 50 and 99 percentile.  For each combination of $\tau$ value and user type
(fan or idol), we randomly sampled 50 accounts from the ``culprit-set'' of
accounts classified as suspicious by \method and another 50 accounts from the
remainder of the graph in a 1:1 fashion, for a total of 1400 accounts.  We randomly organized
and labeled these accounts as suspicious or honest (ignoring foreign and protected accounts) based on several criteria
--- particularly, we identified suspicious behavior as accounts with some
combination of the following characteristics:

\begin{compactitem}
\item Suspension by Twitter since data collection
\item Spammy or malicious tweets (links to adware/malware)
\item Suspicious username, or followers/followees have suspicious usernames (with common prefixes/suffixes)
\item Very few tweets (\textless5) but numerous (\textgreater20) followees who are themselves suspicious
\item Sparse profile but numerous (\textgreater20) followees who are themselves suspicious 
\end{compactitem} 


Figure \ref{fig:twitter_results}
shows how the performance of \method varies with the threshold $\tau$ for
Twitter fans and idols.  As evidenced by the results, \method is able to
correctly discern suspicious accounts with 0.93+ precision for $\tau \leq 1$
for both fans and idols.
And as expected, increasing $\tau$  results
in lower precision.  
As with many informational retrieval and spam detection problems, there are an
unbounded number of false negatives, making recall effectively impossible to
calculate.
Rather, we use the negative precision and observe that it increases as we increase
$\tau$.  Ultimately, because \method is meant to be a complementary method to
catch new cases of fraud, we do not believe that missing some of the attackers
already caught by other methods is a major concern.
With these considerations, we recommend conservative
threshold values for practitioner use.  On Twitter data, we found roughly 150
thousand accounts classified as suspicious between the SRMs for $\tau
= 1$.

\begin{figure*}[htbp!]
\centering
\begin{subfigure}{.3\textwidth}
  \centering
  \includegraphics[width=0.98\textwidth]{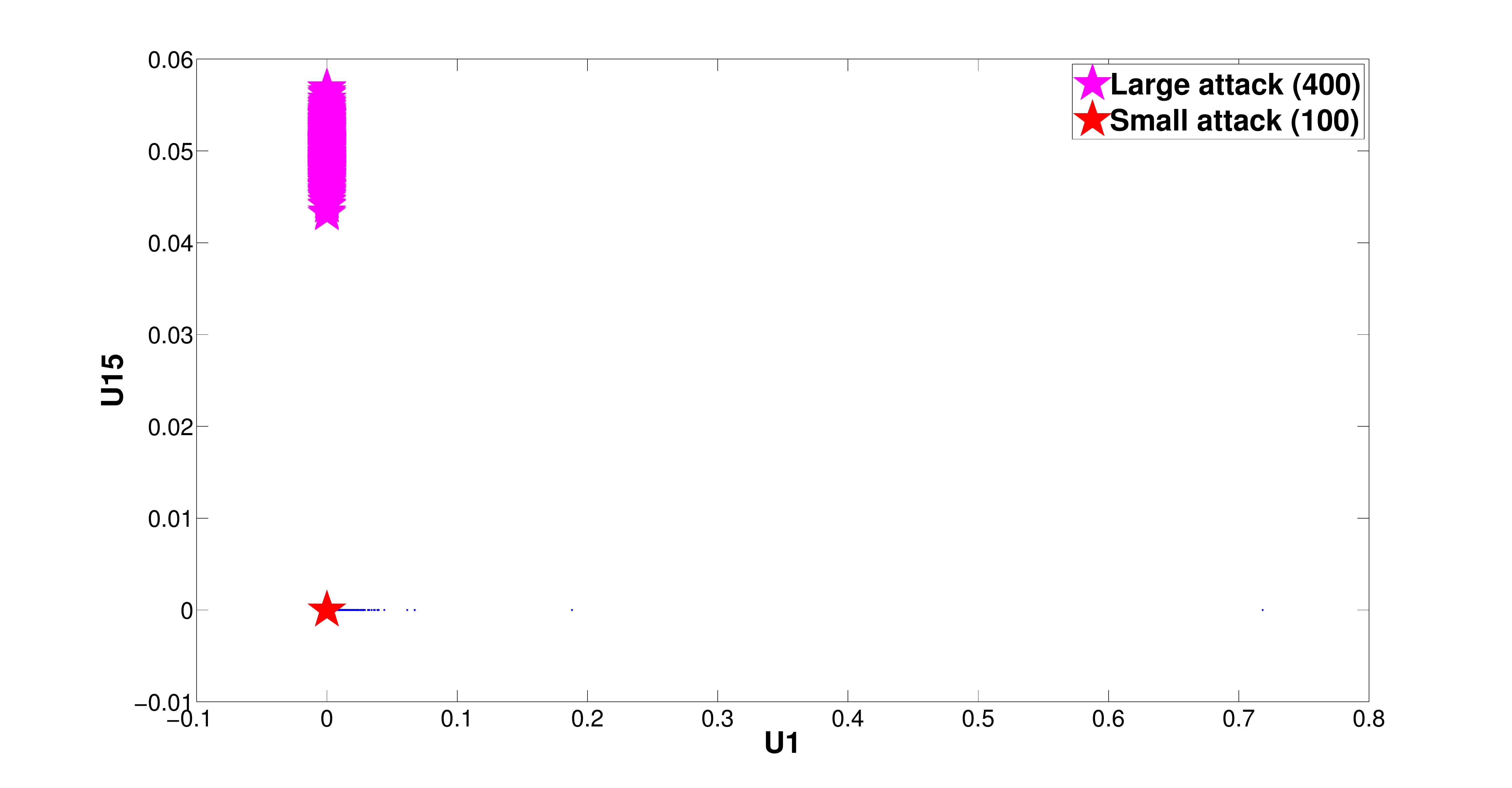}
  \vspace{9mm}
  \caption{Spectral subspace plot}
  \label{fig:eigenplot_injections}
\end{subfigure}%
\begin{subfigure}{.3\textwidth}
  \centering
  \includegraphics[width=0.98\textwidth]{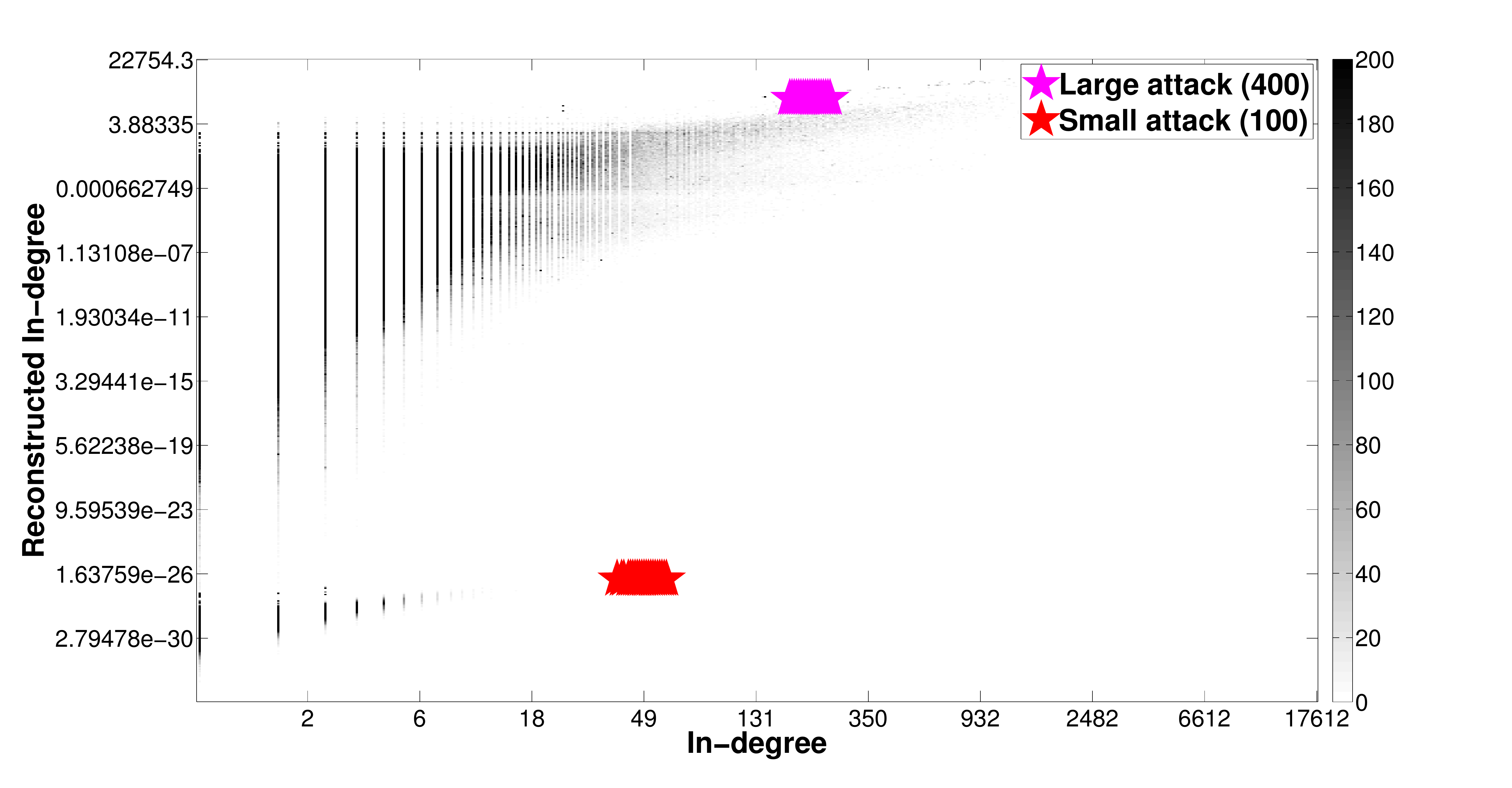}
  \vspace{9mm}
  \caption{ISRM}
  \label{fig:srm_injections}
\end{subfigure}%
\begin{subfigure}{.3\textwidth}
  \centering
  \includegraphics[width=0.98\textwidth]{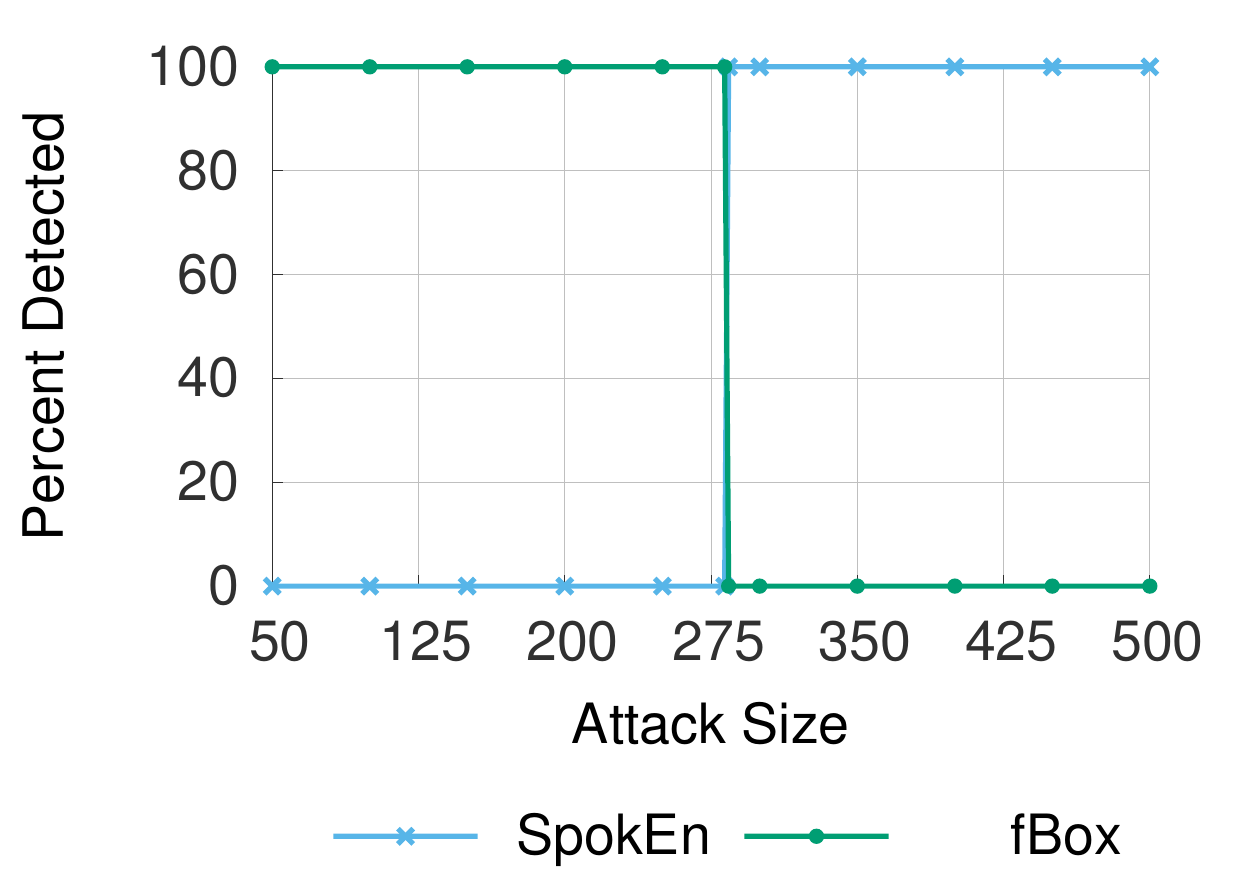}
  \caption{Method complementarity}
  \label{fig:method_comparison}
\end{subfigure}%
\caption{\method and \spoken are complementary, with \method detecting smaller stealth attacks missed by \spoken.  (a) shows how spectral subspace plots identify \blatant attacks but ignore smaller ones. (b) shows the ISRM
plot for the same injections, clearly identifying the suspiciousness of the smart attack.  (c) depicts the complementary nature of \method and spectral methods in detecting attacks at various scales.}
\label{fig:injection_tests}
\end{figure*}

\begin{figure*}[htbp!]
\centering
\begin{subfigure}{.3\textwidth}
  \centering
  \includegraphics[width=0.98\textwidth]{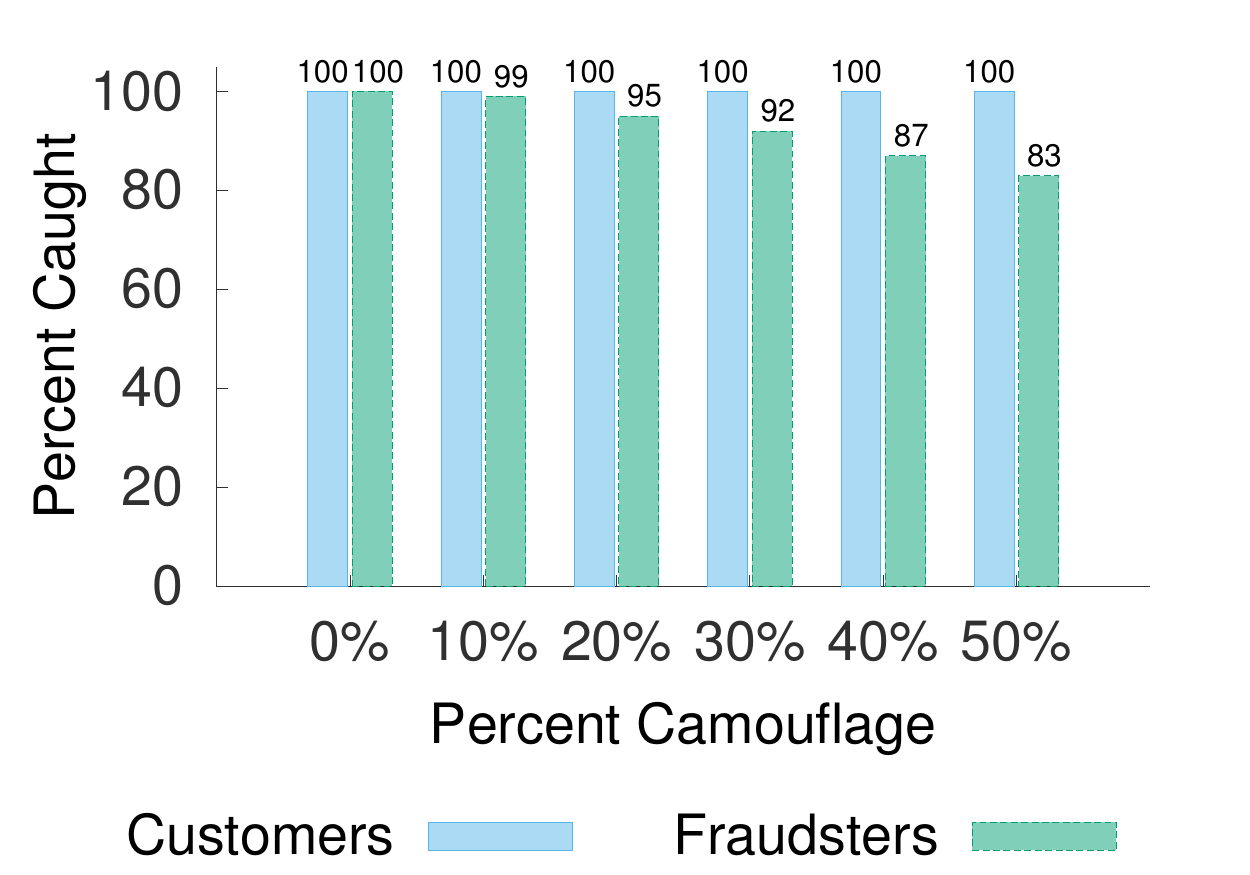}
  \caption{$100 \times 100$ attack}
  \label{fig:camo_100}
\end{subfigure}%
\begin{subfigure}{.3\textwidth}
  \centering
  \includegraphics[width=0.98\textwidth]{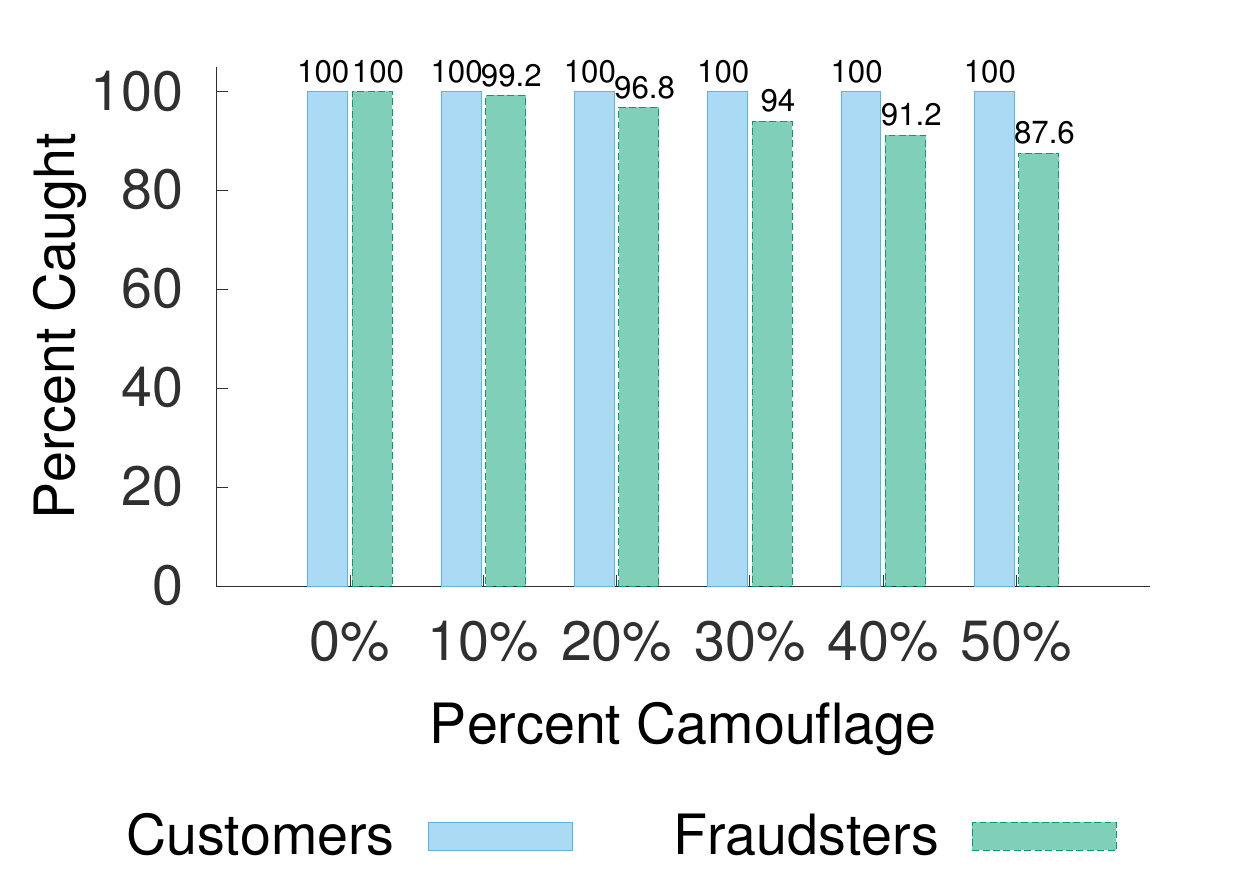}
  \caption{$250 \times 250$ attack}
  \label{fig:camo_250}
\end{subfigure}%
\begin{subfigure}{.3\textwidth}
  \centering
  \includegraphics[width=0.98\textwidth]{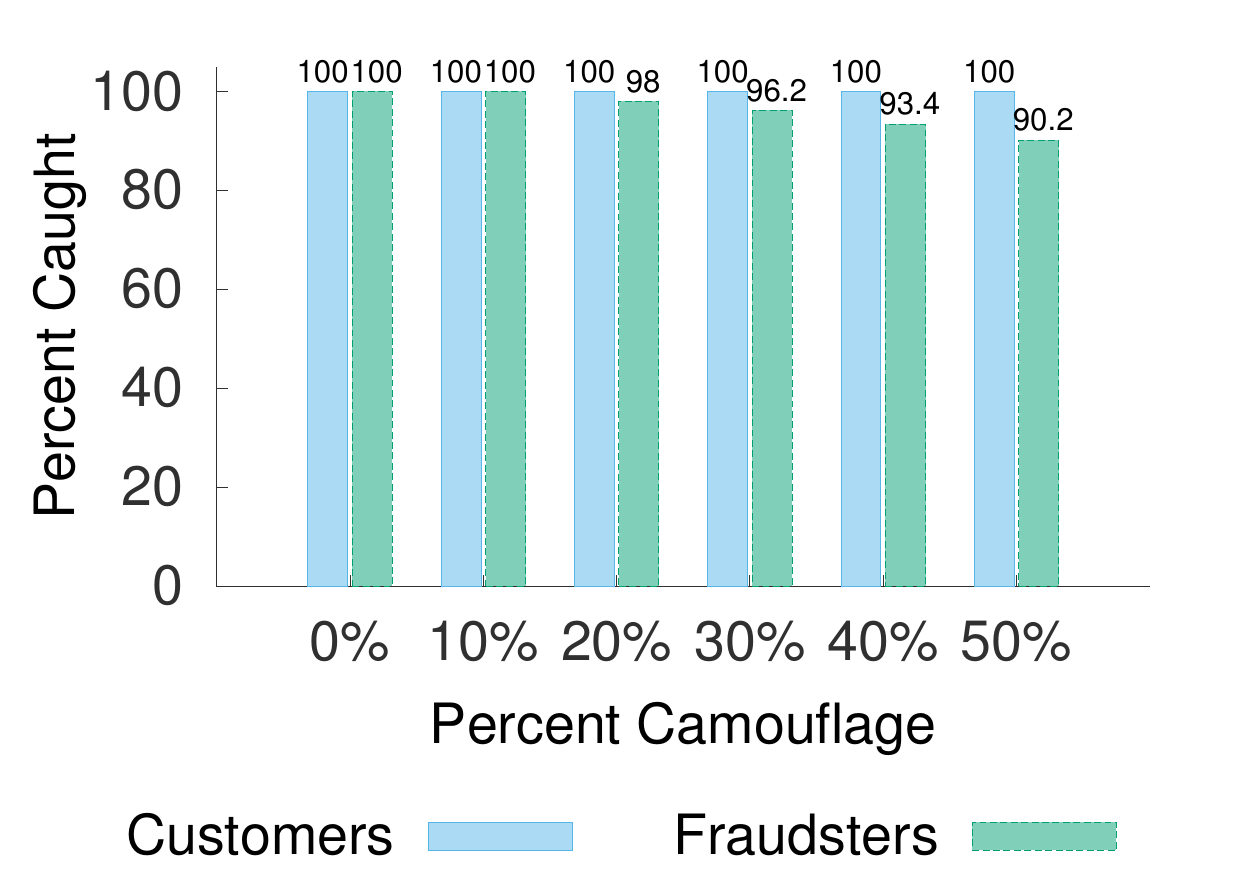}
  \caption{$500 \times 500$ attack}
  \label{fig:camo_500}
\end{subfigure}
\caption{(a), (b) and (c) show \method's robustness to moderate amounts of camouflage for attack sizes of 100, 250 and 500.}
\label{fig:camo_results}
\end{figure*}

\subsection{Complementarity of \method} 
As mentioned before, \method is complementary to spectral
techniques and is effective in catching smart attacks that adversaries could engineer to avoid detection by these techniques.  We demonstrate this claim using both synthetically formulated attacks on the Amazon network as well as comparing the performance of both \method and \spoken on the Twitter network.  In the first experiment, we inject random attacks of scale 100 ($100 \times 100$) and 400 ($400 \times 400$), each with density $p = 0.5$ into the Amazon graph and compare the effectiveness of spectral subspace plots and SRMs in spotting these attacks.  Figure
\ref{fig:eigenplot_injections} shows the spectral subspace plot for the 1st and 15th
components of the SVD, corresponding to one naturally existing community and the \blatant attack, respectively.  The plot clearly shows nodes involved in the \blatant attack as a spoke
pattern, but groups the nodes involved in the small attack along with
many honest nodes that reconstruct poorly in these components at the origin point.  However, in Figure \ref{fig:srm_injections}, we see that the smaller injection is identified as clearly suspicious with distinct separation from other legitimate behavior.

We additionally tested both \method and \spoken on a number of
injections sizes, each random attacks with $p = 0.5$.  Figure \ref{fig:method_comparison} shows the fraction of the
attacking fans caught by each algorithm.  As seen in the figure, the two
methods are clearly complementary, with \method catching all attacks
that \spoken misses.  This verifies the analysis in
Section \ref{sec:adversarial} and substantiates \method's suitability for catching stealth attacks that produce leading singular value $\sigma' < \sigma_k$.


In our second experiment, we compared the performance of both \method and \spoken on a sample of 65743 accounts selected from the Twitter graph.  For each of these accounts, we queried the Twitter API to collect information regarding whether the account was suspended or had posted Tweets promoting adware/malware (checked via Google SafeBrowsing), and if so we marked the account as fraudulent.  This ground truth marking allows us to unbiasedly measure the complementarity of \method and \spoken in catching users that are surely malicious.  Of these users, 4025 were marked as fraudulent via Twitter (3959) and Google SafeBrowsing (66).  For rank $k = 50$, \spoken produced 8211 suspicious accounts whereas \method (with $\tau = 1$) produced 149899.  The user sets identified by both methods were found to be completely distinct, suggesting that the methods are indeed complementary.  Furthermore, \method identified 1133 suspicious accounts from the sampled dataset, of which only 347 were caught via Twitter and Google SafeBrowsing, suggesting that roughly 70\% of \method-classified suspicious accounts are missed by Twitter.


\subsection{\method in the face of camouflage} 
One key point in dealing with intelligent attackers is ensuring that \method
is robust in detecting attacks with moderate amounts of camouflage.  To measure
our performance in such a setting,
we ran \method on a variety of attack sizes in our target range 
and for each attack
varied the amount of camouflage added.  
In our model, we include 
camouflage by following honest accounts at random.  For a random attack of
size $n \times n$ and edge probability $p$, we vary the percent of idols of
fraudulent fans that are camouflage: for 0\% camouflage
each fan follows the $pn$ customers only and for 50\% camouflage
each attacker node follows $pn$ customers and $pn$ random honest idols --- in
general, the percent of camouflage $r$ for $g$ camouflage links is defined as
$\frac{100g}{g+pn}$.
We ran this test for attacks of size 100, 250, and 500 (all below the $\sigma_{25} = 1143.4$ detection threshold) with $p = 0.5$ on the
Twitter graph.

Figure \ref{fig:camo_results} demonstrates \method's robustness --- for all configurations of attack size and camouflage, we catch {\em all} customer idols and over 80\% of
fraudulent fans.  As attack size increases, increased
camouflage is less impactful (intuitively, larger attacks are more flagrant), with \method catching over 90\% of the
fraudulent fans even with 50\% camouflage.

Analysis on \emph{fame}, where customers buying links also have honest links was not conducted.  Customer fame is the analog of attacker camouflage.  However, we expect similar results in detection of accounts in the presence of fame given the symmetry of SVD and \method's disjoint user/object reconstruction.  However, the presence of fame is less realistic in many applications --- for example, in the Twitter context, it is difficult for a spammy account to get honest fans whereas fraudulent fans can follow real idols at will.



\subsection{Scalability of \method}

The running time of \method is dominated by the (linear) large matrix-vector
multiplication per iteration of the Lanczos algorithm to compute SVD for large,
sparse matrices.  Figure~\ref{fig:scalennz} depicts the linear runtime of \method for $k = 25$ while varying number of non-zeros.
\begin{figure}
\centering
\includegraphics[width=0.38\textwidth]{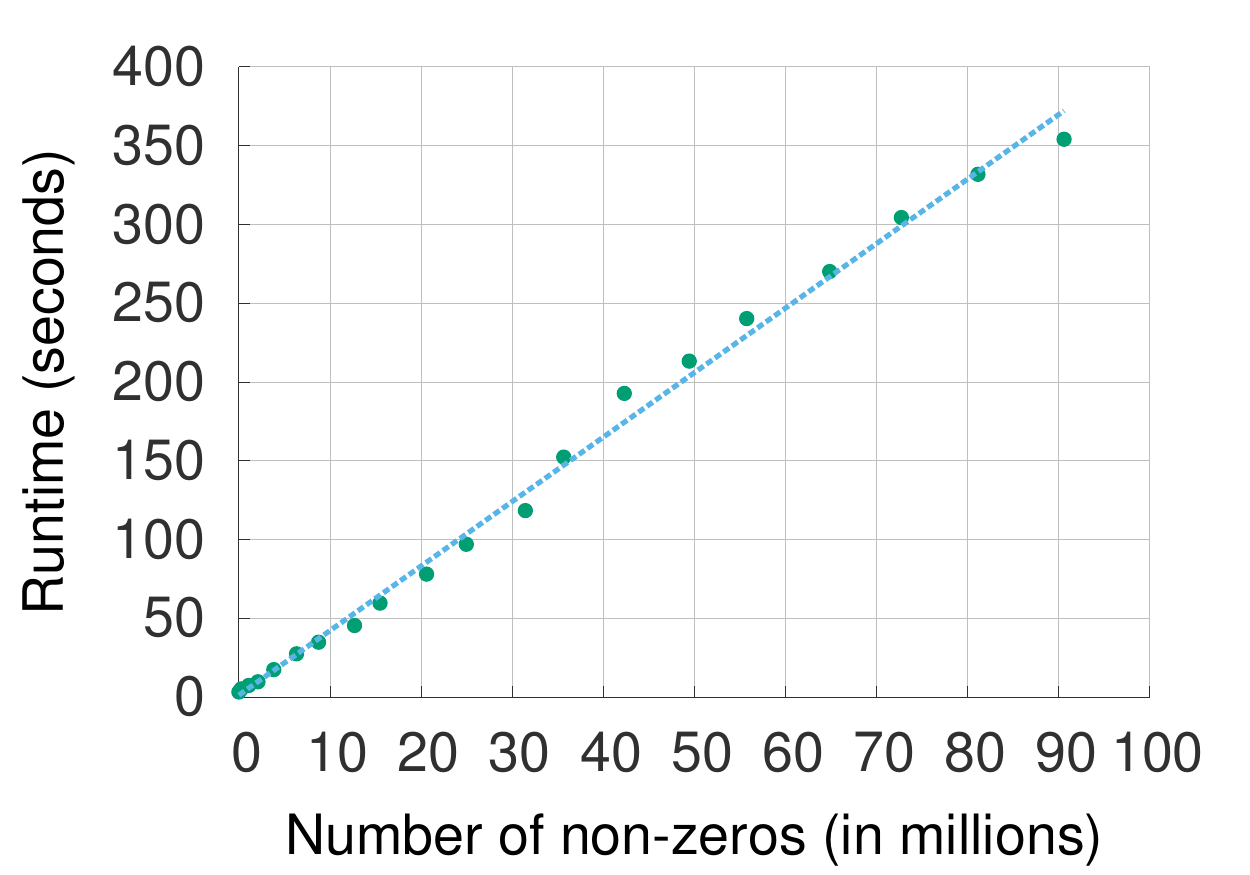}
\caption{\method scales linearly on the size of input data.}
\label{fig:scalennz}
\end{figure}

\section{Conclusions}
\label{sec:concl}
In this work, we approached the problem of distinguishing dishonest attackers and their customers from honest users in the context of online social network or web-service graphs using a graph-based approach (using the adjacency matrix representing user/object relationships).  Our main contributions are:

\begin{compactenum}
 \item {\bf Theoretical analysis:} We examine several state-of-the-art fraud detection methods from an adversarial point-of-view and provide theoretical results pertaining to the susceptibility of these methods to various types of attacks.
 \item {\bf \method~algorithm:} We detail \method, a method motivated by addressing the blind-spots discovered in theoretical analysis, for detecting a class of \emph{stealth} attacks which previous methods are effectively unable to detect.
 \item {\bf Effectiveness on real data:} We apply \method to a large Twitter who-follows-whom dataset from 2010 and discover many tens of thousands of suspicious users whose accounts remain active to date
\end{compactenum}

Our experiments show that our method is scalable, effective in detecting a complementary range of attacks to existing methods and robust to a reasonable degree of camouflage for small and moderately sized stealth attacks.

\bibliographystyle{abbrv}
\bibliography{BIB/christosref,BIB/other}

\end{document}